\documentclass[preprint,10pt,review]{elsarticle}
\usepackage{fontenc}
\usepackage{longtable}

\usepackage{placeins}
\usepackage{lineno}
\usepackage{graphicx}

\usepackage{algorithm}
\usepackage{algorithmicx}

\usepackage{amsmath}
\usepackage{multirow}
\usepackage{amsfonts}
\usepackage{mathtools} 
\usepackage{algpseudocode}

\usepackage{amssymb}
\usepackage{color}
\usepackage{amsmath} 

\usepackage{mathdots}
\usepackage{subcaption}
\usepackage{longtable}
\usepackage{setspace}

\usepackage{booktabs}
\usepackage{subcaption}

\usepackage{xcolor}
\definecolor{mydarkblue}{RGB}{0,0,139}

\usepackage{amsthm} 
\newtheorem{theorem}{Theorem}

\newtheorem{lemma}[theorem]{Lemma}
\newtheorem{corollary}[theorem]{Corollary}
\newtheorem{definition}{Definition}
\newtheorem{assumption}[theorem]{Assumption}
\newtheorem{example}{Example}

\usepackage{colortbl}
\usepackage{siunitx} 

\usepackage{multirow}

\usepackage{colortbl}

\usepackage{graphicx} 
\usepackage{longtable} 
\usepackage{multirow} 
\usepackage{array} 
\usepackage{tabularx}

\usepackage[normalem]{ulem}
\useunder{\uline}{\ul}{}

\setlength{\belowcaptionskip}{-5pt}

\journal{arXiv}
\begin{document}

\begin{frontmatter}
\title{Data Skeleton Learning: Scalable Active Clustering with Sparse Graph
Structures}

\author[a]{Wen-Bo Xie}  
\author[a]{Xun Fu\corref{cor1}}
\author[a]{Bin Chen}
\author[b]{Yan-Li Lee}
\author[a]{Tao Deng}
\author[a]{Tian Zou} 
\author[a]{Xin Wang}
\author[c]{Zhen Liu}
\author[d]{Jaideep Srivastava}

\address[a]{School of Computer Science and Software Engineering, Southwest Petroleum University, \\Chengdu 610500, People's Republic of China}
\address[b]{School of Computer and Software Engineering, Xihua University, \\Chengdu 610039, People's Republic of China}
\address[c]{Web Sciences Center, University of Electronic Science and Technology of China, Chengdu 611731, People's Republic of China.}
\address[d]{College of Science and Engineering, University of Minnesota, Minneapolis MN 55455, United States of America.}

\cortext[cor1]{Corresponding authors at: School of Computer Science and Software Engineering, Southwest Petroleum University, Chengdu 610500, China. E-mail: fuxun0529@163.com (Xun Fu).
}

\begin{abstract}
In this work, we focus on the efficiency and scalability of pairwise constraint-based active clustering, crucial for processing large-scale data in applications such as data mining, knowledge annotation, and AI model pre-training. Our goals are threefold: (1) to reduce computational costs for iterative clustering updates; (2) to enhance the impact of user-provided constraints to minimize annotation requirements for precise clustering; and (3) to cut down memory usage in practical deployments. To achieve these aims, we propose a graph-based active clustering algorithm that utilizes two sparse graphs: one for representing relationships between data (our proposed data skeleton) and another for updating this data skeleton. These two graphs work in concert, enabling the refinement of connected subgraphs within the data skeleton to create nested clusters. Our empirical analysis confirms that the proposed algorithm consistently facilitates more accurate clustering with dramatically less input of user-provided constraints, and outperforms its counterparts in terms of computational performance and scalability, while maintaining robustness across various distance metrics.
\end{abstract}

\begin{keyword}
Interactive clustering, Active learning, Scalable clustering, Semi-supervised clustering, Knowledge annotation
\end{keyword}

\end{frontmatter}

% \begin{linenumbers}

%\vspace{-1em}
\section{Introduction}

Clustering is a powerful tool for data mining, data pre-processing and knowledge annotation. Its use is therefore essential in advancing data-driven technologies and highly valuable across a range of applications, from industrial automation to intelligent systems \cite{ding2024clustering,gao2024improving}. Nevertheless, classical unsupervised clustering methods lack the finesse for high-quality groupings \cite{Xie2024Boosting,angell2022interactive}, highlighting the necessity of human-machine synergy. In light of this, constrained clustering using active learning frameworks (a.k.a. active clustering) \cite{deng2022query,zhou2023active} emerges as a crucial approach, with those based on pairwise constraints being particularly vital. This is because pairwise constraint-based active clustering algorithms adeptly handle the influx of data replete with ever-changing information. This capacity enables these algorithms to navigate the complexities of data such as voiceprints, industrial production data, and streaming videos, which do not conform to traditional labeling paradigms, thereby facilitating a more nuanced understanding and responding of information that spans a wide spectrum of contexts. 

However, the rapid expansion of data volumes presents new challenges to the practical applicability of these pairwise constraint-based active clustering algorithms. At the core of active clustering lies its human-in-the-loop design \cite{zhang2014context}. Typically, this loop starts with the machine initializing a preliminary clustering result. From there, the machine leads human-machine interactions by identifying suspicious data instances within the current clustering result and soliciting human feedback to refine the grouping \cite{xiong2013active}. This approach focuses on optimizing the clustering result with minimal user-provided constraints, contrasting with the less targeted methods of semi-supervised learning \cite{yang2020semi,cai2023review} and the exhaustive human-led clustering \cite{yang2020interactive,chen2021interactive}. Yet, this enhanced human-machine interactive efficiency incurs significant time and space costs \cite{shi2020fast,zhou2023active}: 
\begin{itemize}
\item The process of refining clustering results entails iteratively identifying suspicious data points and then reconstructing clusters, which results in a significant increase in time-complexity. 

\item The need for auxiliary structures substantially amplifies space-complexity. 

\item Dealing with pairwise constraints further exacerbates this challenge, resulting in a quadratic increase in both time and space complexities of the algorithm \cite{li2019ascent,deng2022query}. 
\end{itemize}
These bottlenecks in computational efficiency and scalability limit the practicality of active clustering based on pairwise constraints, particularly in the context of real-world big data applications.

To address this, we propose the \textbf{D}ata \textbf{S}keleton \textbf{L}earning-based Active Clustering (DSL), a scalable active learning framework for interactive clustering that incorporates a highly effective human-in-the-loop component. DSL excels in achieving more accurate clustering with minimal cost, leveraging a carefully crafted sparse graph (i.e., our proposed data skeleton) that captures intricate data relationships and nested clusters. This structure is further boosted by a machine-led, targeted learning loop that actively seeks human annotation on suspicious data. Core to this loop is a minimal constraint graph, tailored to minimize constraints burden at a low space cost. It only encapsulates constraints and is armed with a specialized deduction mechanism for efficient processing. This trifecta of innovations empowers DSL with low demands for user-provided constraints, reduced time-complexity, and minimal space-complexity, aligning with the practical requisites of large-scale data pre-processing. 

We outline our primary contributions as follows: 

(1) Introducing DSL: DSL hinges on two sparse graphs: our proposed data skeleton, for data relationships representation; and the minimal constraint graph, for efficient clustering update, which is guided by a distinctive deduction mechanism and user-provided constraints. (Section 4)

(2) Theoretical Assurance: It is proven that DSL achieves accurate clustering with a definitive user-provided constraints upper bound. Meanwhile, our analysis reveals that the DSL algorithm achieves a time complexity of $O(n^2\log n)$ and space complexity of $O(n)$, markedly surpassing its counterparts. (Section 5)

(3) Interactive Efficiency and Scalability: Demonstrated through 18 datasets, DSL achieves better clustering (measured by Adjusted Rand Index) with minimal user-provided constraints in most cases and showcases scalability up to half a million entries on a standard PC. Moreover, DSL outperforms its contemporaries in terms of response times as well as memory footprints on various-size synthetic datasets. (Section 6)

(4) Metric Robustness: DSL exhibits low sensitivity to different distance metrics, further highlighting its versatility and practicality. (Section 7)

\section{Related Work}

Previous research in the field of active clustering has concentrated on the identification of key data and the potential for this to be used to refine the clustering results.

\noindent\textbf{Key Data Identification.} Methods focusing on key data identification can be divided into the sample-based method and the sample-pair-based method \cite{xiong2016active}.

(1) Sample-based methods \cite{xiong2016active,fernandes2020improving} select data points of interest, then query pairwise constraints based on the selected key data points. Centrality-based criterion methods \cite{basu2004active,yu2017active} focus on 
exploring the distribution of clusters. For instance, FFQS \cite{basu2004active} introduces a farthest-first traversal scheme to select data points from different regions, thereby increasing the diversity among the chosen points. However, only utilizing the centrality criterion is far from enough \cite{shi2020fast}, due to the inability to fine-tune fuzzy areas between clusters. Therefore, uncertainty-based criterion \cite{xiong2013active,mallapragada2008active} were proposed. 
% \textcolor{orange}{MinMax \cite{mallapragada2008active} calculates the maximum  similarity between each data point and the established neighborhoods \cite{basu2004active} and selects the point with the smallest maximum similarity, indicating a point with the highest uncertainty}. 
MinMax \cite{mallapragada2008active} computes the maximum similarity between each data point and others in its neighborhood set \cite{basu2004active}, then selects the pair with the smallest maximum similarity, representing the pair with the highest uncertainty.
Additionally, multiple-based criteria \cite{shi2020fast,li2019ascent,hazratgholizadeh2023active} were proposed to combine diversified criteria to identify more informative data points. 
One of the most representative algorithms is ADP \cite{shi2020fast}. ADP utilizes the density peak clustering method \cite{rodriguez2014clustering} for initial clustering. It then queries users for constraints on density peak-based cluster centers. Based on this, ADP applies Shannon’s theorem \cite{salazar2023phase} to assess the uncertainty of data points and queries additional constraints accordingly.

(2) Sample-pair-based methods \cite{sun2022active,abin2017random,abin2018querying} streamline the process by directly analyzing individual pairs of data points when querying pairwise constraints. However, when considering all possible pairs, it becomes necessary to determine the priority of each pair, which can limit the scalability of these methods. Density-based methods \cite{abin2020density,shen2023semi} offer a more cost-effective solution by capturing key relationships between point pairs based on regions or data structures identified through density estimation. For example, AAVV \cite{abin2020density} selects constraints that reveal cluster boundaries and intra-cluster structures using an auxiliary density-based structure, built by linking each point to its nearest denser neighbor. SSEHCCI \cite{shen2023semi} focuses on effectively delineating cluster boundaries and morphology by selecting constraints from points at dense cluster edges or between clusters. 
% which aims to effectively delineate cluster boundaries and morphology by selecting constraints from points at dense cluster edges or between clusters. It calculates the density of each point by summing the distances between a point and its surrounding points.}  

Our proposed DSL follows a sample-pair-based framework. Unlike previous methods, DSL identifies cluster centers (representative nodes in connected components) and outliers by prioritizing edge weights in descending order on a tailored sparse graph, i.e., the proposed data skeleton. This graph is iteratively constructed through the identification of high-density regions using a nearest neighbor approach. By adopting this methodology, DSL not only ensures computational efficiency but also demonstrates superior scalability when applied to large datasets, while minimizing memory usage.
% \textcolor{orange}{We proposed a novel and more scalable density-based method. It aims to determine the cluster centers and outliers based on the descending order of edge weights on the constructed graph (data skeleton). The data skeleton is constructed by iteratively identifying regions of high local density in the nearest neighbor technique.}

\noindent\textbf{Clustering Results Refinement}. The focus of this type of method is on the refinement of clustering results. This process aims to enhance the quality of the clustering results to the greatest extent possible while minimizing the amount of user-provided constraints.

(1) Neighborhood set-based methods \cite{mallapragada2008active, xiong2013active, hazratgholizadeh2023active} are traditional strategies for this task. They initially collect constraints to generate a constraint graph and subsequently produce additional constraints leveraging the transitive properties of must-link and cannot-link constraints \cite{basu2008constrained}. 
However, as the number of constraints increases, both the memory and time costs tend to rise significantly. In order to facilitate the observation and adjustment of clustering results in real-time, online frameworks \cite{sun2022active,fernandes2020improving,deng2022query} have been proposed. In the context of online methods, the process of clustering is executed in a iterative manner, with each round selecting data that has been identified as suspicious based on the outcomes of previous rounds. However, this exacerbates the computational cost of the algorithm.

% \textcolor{orange}{
% (1) Semi-supervised method \cite{basu2004active, xiong2013active} is the traditional strategy for this task. At first key data points are identified based on active learning strategies. Then the identified data points are sequentially inserted into neighborhoods \cite{deng2022query}. In neighborhoods, data points in the same neighborhood belong to the same cluster, and in different neighborhoods belong to other clusters. This kind of property generates additional constraints for accurate guidance of the following semi-supervised clustering. However, it is characterized by a constraint complexity of $O(\Delta^2)$, which imposes significant computational and storage overheads when processing a large number of constraints.}

(2) Matrix analysis-based methods offer a distinctive approach to the utilisation of constraints. Active spectral clustering methods, such as ASC \cite{wang2010active} and IU-Red \cite{wauthier2012active}, utilize perturbations in the Laplacian matrix and analyze its eigenvectors to both select key data and refine the clustering results. However, they are highly sensitive to parameter selection. As a solution, ACSCHM \cite{wang2020active} employs the Hessian matrix to enhance its ability to capture the intrinsic geometric structure of datasets. Additionally, SPACE \cite{zhou2023active} refines clustering output by integrating multiple weak clustering results into a consensus matrix via self-paced learning. 
% (2) Matrix analysis-based methods offer a distinctive approach to the utilisation of constraints. Active spectral clustering methods \cite{wang2010active,wauthier2012active} utilize perturbations in the Laplacian matrix and analyze its eigenvectors to both select key data and refine the clustering result. \textcolor{orange}{However, they are highly sensitive to the selection of parameters.} As a solution, ACSCHM \cite{wang2020active} employs the Hessian matrix to enhance the ability to capture the intrinsic geometric structure of datasets. Additionally, SPACE \cite{zhou2023active} improves clustering by integrating multiple weak clustering results into a consensus matrix through self-paced learning and iterative refinement. \textcolor{orange}{Nevertheless, the refinement operations in the matrix-based method typically involve high time and space overhead due to their reliance on matrix manipulations.} 

(3) In addition, hierarchical methods \cite{van2018cobra,van2018cobras} can also collaborate with interactive frameworks to improve the quality of the clustering by systematically fine-tuning at various levels of the cluster tree. COBRA \cite{van2018cobra} first generates multiple super-instances through k-means clustering. It then employs a bottom-up approach to refine the clustering result by aggregating these super-instances using must-link constraints. In contrast, COBRAS \cite{van2018cobras} 
% \textcolor{orange}{starts with a single super-instance that contains only one cluster. It then keeps aggregating and splitting the set of super-instances and the corresponding clustering result, using must-link and cannot-link constraints.}
strategically incorporates both must-link and cannot-link constraints to break down and reassemble clusters from the top-down using super-instances.

Our proposed DSL algorithm also adopts a top-down framework that incorporates both must-link and cannot-link constraints, while it introduces a more flexible mechanism for refining its tailored tree clustering results (i.e., the proposed data skeleton). By prioritizing larger edges, which can appear near both the root and leaf nodes, DSL is capable of effectively delineating large clusters while also identifying small isolated clusters, leading to more efficient clustering refinement. In addition, it is important to note that in DSL, `neighbors' refer to spatially adjacent nodes, which differs from the `neighborhood' concept used in neighborhood set-based methods.

\begin{table}[h]
\centering
\caption{{Time and space complexities of the mentioned methods. $\Delta$ presents the upper bound of the input user-provided constraints, and $n$ presents the scale of the data set. $t$, $m$, $p$, and $N_s$ are specific parameters only used in their respective papers.}}
\label{tab:complexity}
\resizebox{1.0\textwidth}{!}{%
\begin{tabular}{lllllllll}
\toprule
\textbf{Method} & \textbf{Year} & \textbf{Time-cost} & \textbf{Space-cost} && \textbf{Method} & \textbf{Year} & \textbf{Time-cost} & \textbf{Space-cost} \\
\cline{1-4} \cline{6-9}
FFQS \cite{basu2004active} & 2004 & $O(\Delta^2 n^2)$ & $O(n+\Delta^2)$ &&
MinMax \cite{mallapragada2008active} & 2008 & $O(\Delta^2 n^2)$ & $O(n+\Delta^2)$ \\
ASC \cite{wang2010active} & 2010 & $O(\Delta n^3)$ & $O(n^2)$ &&
IU-Red \cite{wauthier2012active} & 2012 & $O(\Delta n^3)$ & $O(n^2)$ \\
NPU \cite{xiong2013active} & 2013 & $O(\Delta^2 t n \log n)$ & $O(tn+\Delta^2)$ &&
URASC \cite{xiong2016active} & 2016 & $O(\Delta n^4)$ & $O(n^2)$ \\
ALSD \cite{yu2017active} & 2017 & $O(\Delta^2 n^2)$ & $O(n+\Delta^2)$ &&
RWACS \cite{abin2017random} & 2017 & $O(\Delta n^3)$  & $O(n^2)$ \\
COBRA \cite{van2018cobra} & 2017 & $O(n N_s + N_s^2 \log N_s)$ & $O(n)$ &&
COBRAS \cite{van2018cobras} & 2018 & $O(\Delta^2 \log \Delta)$ & $O(n)$ \\
QBC-FLA \cite{abin2018querying} & 2018 & Non-polynomial &  $O(n+\Delta^2)$ &&
ASCENT \cite{li2019ascent} & 2019 & Non-polynomial  & $O(n+\Delta^2)$ \\
ACSCHM \cite{wang2020active} & 2019 & $O(\Delta n^3)$ & $O(n^2)$ &&
AAVV \cite{abin2020density} & 2020 & $O(\Delta n^2)$ & $O(n^2)$ \\
FIECE-EM \cite{fernandes2020improving} & 2020 & $O(n^3)$ & $O(n)$ &&
ADP \cite{shi2020fast} & 2021 & $O(\Delta n^2)$ & $O(n^2)$ \\
QAAML \cite{deng2022query} & 2022 & $O(\Delta^2 t n^2 \log n)$ & $O(n^2)$ &&
ACDEC \cite{hazratgholizadeh2023active} & 2022 & Non-polynomial  & O(n)  \\
ADC \cite{sun2022active} & 2022 & Non-polynomial & O(n) &&
SSEHCCI \cite{shen2023semi} & 2023 & $O(p n^2 \log n)$ & $O(p n)$ \\
SPACE \cite{zhou2023active} & 2023 & Non-polynomial & $O(m n^2)$ &&
DSL  & Proposed & $O(n(\log n+ \Delta\log\Delta))$ & $O(n)$ \\
\bottomrule
\end{tabular}%
}
\end{table}

\noindent \textbf{In real-world applications}, however, the complexity of algorithms significantly affects the response time of interactive systems. This has prompted discussions about the practical value of certain algorithms, particularly those requiring real-time responses in online settings. Table 1 outlines the time and space complexities of the algorithms discussed above. It is evident that some algorithms prioritize reducing the demand for user-provided constraints, yet fail to address the implications of response delays caused by high computational complexity. In contrast, our proposed DSL algorithm makes innovative contributions to both key data identification and the refinement of clustering results. By utilising two sparse graph data structures, DSL significantly reduces the time cost while also decreasing the memory required.

\section{Preliminary}
We first clarify key concepts that will recur in later sections.

\begin{definition}
\label{Data Skeleton}
{\bf Data Skeleton.} A data skeleton is a sparse weighted directed graph designed to encapsulate the affiliations or relationships among data points. Given a dataset $\mathcal{X}=\{x_i\}_1^n$, its data skeleton is formally defined by a triplet $\mathcal{G}_s=(\mathcal{V}_s, \mathcal{E}_s, \mathcal{C}_s)$. Here, $\mathcal{V}_s$ represents the node set, corresponding to all the data points in $\mathcal{X}$. The edge set $\mathcal{E}_s$ comprises elements of the form $e_{ij} = (\langle x_i,x_j \rangle, w_{ij})$, where each edge not only indicates that $x_i$ should belong to the same cluster as $x_j$ but also carries a weight $w_{ij}$ denoting the distance {\rm $\text{dist}(x_i,x_j)$} between the two data points. Importantly, the total number of edges is constrained by $|\mathcal{E}_s|\leq|\mathcal{V}_s|$. $\mathcal{C}_s$ is the set of representative nodes for connected components in the data skeleton. 
\end{definition}

\begin{definition}
\label{Pairwise Constraints}
{\bf Pairwise Constraints \cite{wagstaff2001constrained}.} A pairwise constraint for nodes $x_i$ and $x_j$ is defined as $\tau_{ij} = ((x_i, x_j), \theta_{ij})$, with $\theta_{ij}$ being a binary indicator. 
{\bf NOTE:} In this paper, $\theta_{ij} \!=\! 0$ dictates a must-link, grouping $x_i$ and $x_j$ together, while $\theta_{ij} \!=\! 1$ enforces a cannot-link, keeping $x_i$ and $x_j$ separate. This counterintuitive notation is crucial for the mechanisms of our algorithm.
\end{definition}

\begin{definition}
\label{Constraint Graph}
{\bf Minimal Constraint Graph.} Given a data skeleton $\mathcal{G}_s$, a minimal constraint graph, denoted as $\mathcal{G}_{c}=(\mathcal{V}_c, \mathcal{E}_c)$,  is an weighted undirected graph composed exclusively of user-provided constraints, without including any constraints inferred through deduction. Herein, $\mathcal{V}_c \subseteq \mathcal{V}_s$ signifies a subset of nodes from $\mathcal{G}_s$, while $\mathcal{E}_c$, composed of elements $\{\tau_{ij}\}$, represents the set of pairwise constraints associated with $\mathcal{G}_s$.
\end{definition}

\begin{definition}
\label{Reciprocal Nearest Nodes} 
{\bf Reciprocal Nearest Neighbors (RNNs) \cite{xie2023PRSC}.} A pair of nodes $x_i$ and $x_j$ are reciprocal nearest neighbors if they are each other's closest nodes, denoted as $x_i= \mathcal{N}(x_j)$ and $x_j = \mathcal{N}(x_i)$.
\end{definition}

\noindent{\bf Other Notations:} In a data skeleton, the symbols with $s$ in the subscript are related to the data skeleton. $\mathcal{G}_s$ represents the Data Skeleton; $\mathcal{V}_s$, $\mathcal{E}_s$, $\mathcal{C}_s$ represent the node set, edge set, and representative node set of the data skeleton. The symbols with $c$ in the subscript are related to the minimal constraint graph. $\mathcal{G}_c$ represents the minimal constraint graph; $\mathcal{V}_c$ and $\mathcal{E}_c$ represent the node set and edge set of the minimal constraint graph. Given two data points $x_i$ and $x_j$, the cohesion probability $\textbf{P}^C_{ij}$ represents the likelihood of $x_i$ being in the same category as $x_j$. If $x_i$ and $x_j$ are already assigned to a cluster, the suspicion probability $\textbf{P}^S_{ij}$ denotes the likelihood that $x_i$ and $x_j$ are incorrectly assigned to the same cluster. Notations that will be used recurrently are listed in Table \ref{tab:key_symbol}.

\begin{table}
\centering
\caption{A summary of notations}
\label{tab:key_symbol}
\resizebox{\textwidth}{!}{%
\begin{tabular}{lllll}
\toprule
\textbf{Symbol} & \textbf{Explanation} &  & \textbf{Symbol} & \textbf{Explanation} \\
\midrule
$\mathcal{X}=\{x_i\}_1^n$ & Dataset with $n$ points && $\mathcal{G}_s$ & Data Skeleton \\
$\mathcal{G}_c$ & Minimal Constraint Graph && $\mathcal{V}_s$ & Node set in $\mathcal{G}_s$\\
$\mathcal{V}_c$ & Node set in $\mathcal{G}_c$&& $\mathcal{E}_s$ & Edge set in $\mathcal{G}_s$\\
$\mathcal{E}_c$ & Edge set in $\mathcal{G}_c$&& $\mathcal{C}_s$ & Representative node set in $\mathcal{G}_s$ \\
$e_{ij}$ & Edge on $\mathcal{G}_s$&& $w_{ij}$ & Distance of $x_i$ and $x_j$ on $e_{ij}$ \\
$\tau_{ij}$ & Pairwise constraint of $x_i$ and $x_j$ && $\theta_{ij}$ & Constraint type to be determined \\
$\mathcal{N}(x_i)$ & Nearest neighbor of $x_i$ && ${\rm\textbf{P}}^C_{ij}$ & Cohesion probability of $x_i$ and $x_j$ \\
$\text{dist}(x_i,x_j)$ & Distance of $x_i$ and $x_j$ && ${\rm\textbf{P}}^S_{ij}$ & Suspicion probability of $x_i$ and $x_j$ \\
$d_{in}(x_i)$ & In-degree of $x_i$ in $\mathcal{G}_s$ && $\mathcal{P}_{ij}$ & Shortest path of $x_i$ and $x_j$ in $\mathcal{G}_c$ \\
$k$ & Ground-truth categories in $\mathcal{X}$ && $\lambda$ &  Probability of erroneous edge\\
\bottomrule
\end{tabular}%
}
\end{table}

\section{Data Skeleton Learning}

DSL prioritizes the selection of pairwise data nodes with the highest suspicion for human annotation, maximizing the efficacy of each interactive update to the data structure. To begin with, we provide an overview of DSL's framework.

\subsection{Framework}

\begin{figure}[t]
\begin{center}
\centerline{\includegraphics[width=1.0\columnwidth]{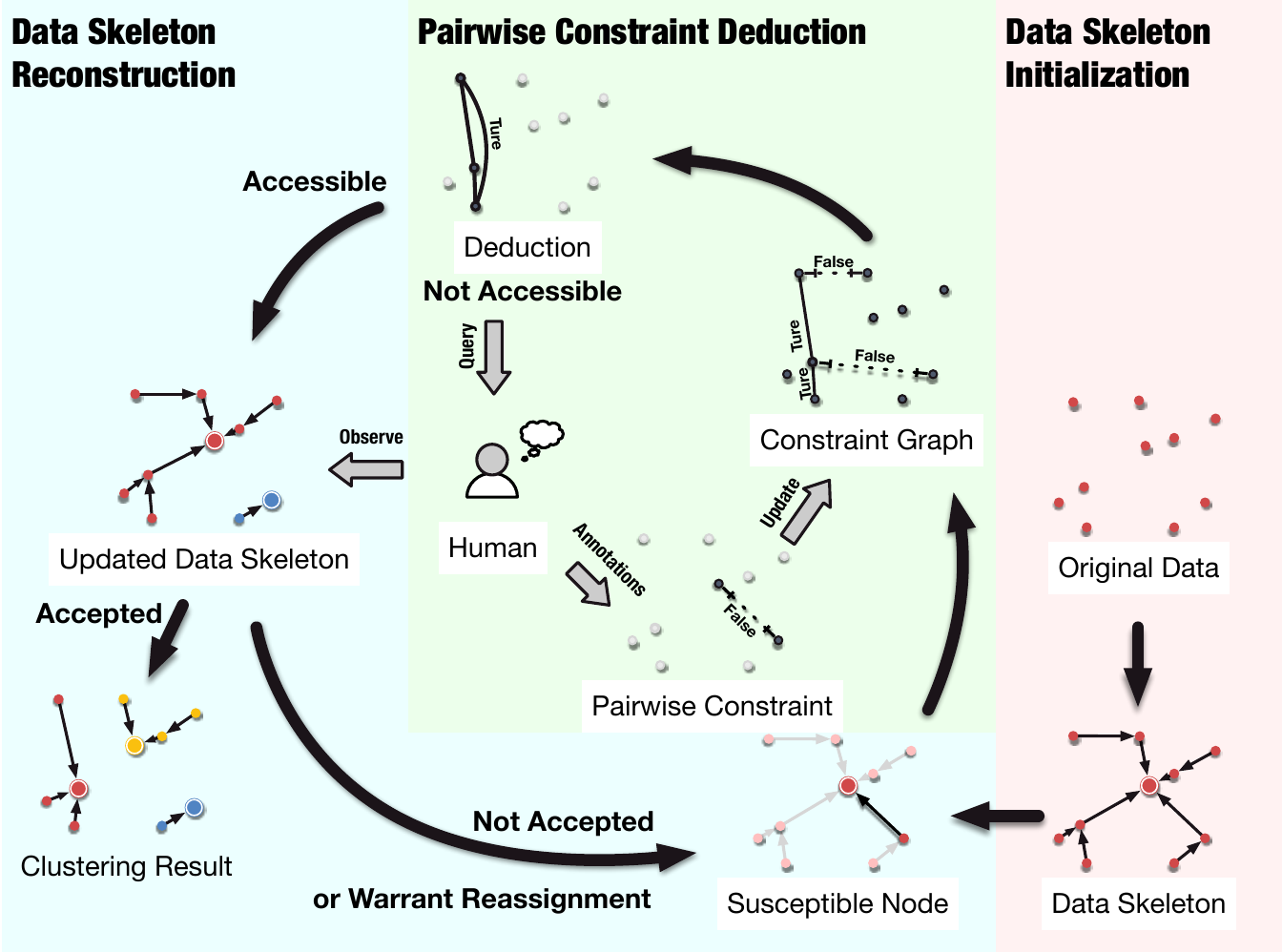}}
\caption{Schematic diagram of Data Skeleton Learning.}
\label{fig:framework}
\end{center}
\end{figure}

The proposed DSL is on the basis of the following assumption.

\begin{assumption}
Given two nodes $x_i$ and $x_j$, the node distance {\rm $\text{dist}(x_i,x_j)$} inversely corresponds to the cohesion probability {\rm$\textbf{P}^C_{ij}$}, which represents the likelihood of $x_i$ being in the same category as $x_j$. Conversely, if $x_i$ and $x_j$ are assigned to the same cluster, the distance directly relates to their suspicion probability {\rm$ \textbf{P}^S_{ij}$}, which indicates the chance of misassignment. Therefore: {\rm$\text{dist}(x_i,x_j)\propto {\textbf{P}}^S_{ij} \propto 1/\textbf{P}^C_{ij}$}. 
\end{assumption}

Building on the foundational assumption outlined above, it is evident that a naive approach to data interaction, predicated solely on pairwise distances, is markedly inefficient. This inefficiency stems from the fact that a dataset with $n$ items yields a total of $\frac{n(n-1)}{2}$ pairwise distances. Therefore, our proposed DSL leverages a tailored clustering tree, i.e., the proposed \textit{data skeleton}, to significantly reduce the volume of pairwise distances that need to be addressed. Based on this, DSL ensures the efficient optimization of clustering outcomes through the implementation of three main components: Data Skeleton Initialization (\textsc{DSInit}), Reconstruction (\textsc{Recons}), and Paiwise Constraint Deduction (\textsc{Deduction}). 

Figure \ref{fig:framework} provides a visual schematic of its design. Given a dataset $\mathcal{X}$, the process commences by deciphering inherent data relationships, and translating raw data into an initial data skeleton, denoted as $\mathcal{G}_s$ (\textsc{DSInit}). On this foundation, DSL launches a training loop, intrinsically intertwined with human interactions (\textsc{Recons}). Within this loop, DSL identifies nodes in the data skeleton that are most susceptible to allocation variances. Subsequently, leveraging the minimal constraint graph $\mathcal{G}_c$, constraints are either probed or deduced to ascertain if the node should uphold its initial cluster allocation or warrant reassignment (\textsc{Deduction}). In scenarios where the constraint graph yields ambiguous decisions, DSL queries user-provided constraints and subsequently refines the constraint graph. Depending on the deduction from the constraint graph (\textsc{Deduction}), the data skeleton is adjusted accordingly. The loop then revisits its origin, spotlighting another potential node with allocation ambiguities. As iterations progress, the reconstructed data skeleton embodies a nested clustering result. The loop ends when the user deems the clustering result satisfactory or when a pre-set number of rounds has been reached, at which point DSL outputs the corresponding clustering results based on the connected subgraphs in $\mathcal{G}_s$. The corresponding workflow is shown in Algorithm \ref{alg:framework}.

The subsequent subsections delve into the detailed intricacies and fundamental mechanics underpinning our DSL algorithm.

\begin{algorithm}[tb]
   \caption{\textsc{DSL}}
   \label{alg:framework}\label{alg:process}
   {\scriptsize
    \begin{algorithmic}[1]
        \Statex \textbf{Input}: Data: $\mathcal{X}$=$\{x_i\}_1^n$
        \Statex \textbf{Output}:  Labels: L
        \State{$\mathcal{G}_{s}$ $\gets$ $\textsc{DSInit}$($\mathcal{X}$)}  \hfill \textcolor{blue}{\scriptsize $\triangleright$ Algorithm \ref{alg:DSInit}}
        \State $\mathcal{G}_{c} \gets (\mathcal{V}_c = \varnothing,\mathcal{E}_c = \varnothing)$  
        \While{\textsc{Accept}($\mathcal{G}_{s}$) \textbf{is} False \textbf{and} loop count not exceeded}  
            \State $\mathcal{G}_{s}, \mathcal{G}_{c} \gets$ \textsc{Recons}($\mathcal{G}_{s}, \mathcal{G}_{c}$) \hfill \textcolor{blue}{\scriptsize $\triangleright$ Algorithm \ref{alg:DSRECons}}
        \EndWhile
        \State L $\gets$ \textsc{Label}($\mathcal{G}_{s}$) \hfill \textcolor{blue}{\scriptsize $\triangleright$ Based on the connected subgraphs in $\mathcal{G}_{s}$ }
        \State \textbf{return} $L$
    \end{algorithmic}
    }
\end{algorithm}

\subsection{Data Skeleton Initialization}

Our primary motivation for constructing the data skeleton within DSL is to dramatically reduce the number of potential pairwise node comparisons, even before human-machine interaction begins, ensuring more accurate and efficient targeting of nodes that require correct assignment in clustering. In light of this, we design the \textsc{DSInit} algorithm for initializing a data skeleton.

% % 正文
% As illustrated in Algorithm \ref{alg:DSInit}, \textsc{DSInit} takes dataset $X$ as input to construct the data skeleton $\mathcal{G}_{s}$ as its output.  The algorithm starts by creating an empty $\mathcal{G}_{s}$, adding nodes from $X$ to it, and treating all nodes in $\mathcal{G}_{s}$ as representative nodes (line 1). It then undergoes an iterative construction on $\mathcal{G}_{s}$ (lines 2-14)\textcolor{red}{, where in each iteration, the focus is solely on the representative nodes}. The two key steps of \textsc{DSInit} are the connection between each representative node in $\mathcal{C}_s$ to its nearest neighbor (lines 3-5), and the disconnection of the outgoing edge from the node with the maximum in-degree within each pair of RNNs (lines 6-13). During each iteration, the node with the higher in-degree within each RNN is incorporated into the updated representative set $\mathcal{C}_s$ (line 9); if in-degrees are equal, a node is chosen randomly. This iteration persists until only a single representative remains in $\mathcal{C}_s$. \looseness=-1
As illustrated in Algorithm \ref{alg:DSInit}, \textsc{DSInit} takes dataset $\mathcal{X}$ as input to construct the data skeleton $\mathcal{G}_{s}$ as its output. The algorithm starts by creating an empty $\mathcal{G}_{s}$, adding nodes from $\mathcal{X}$ to it, and treating all nodes in $\mathcal{G}_{s}$ as representative nodes, i.e., $\mathcal{C}_s = \mathcal{X}$ (line 1). It then undergoes an iterative construction on $\mathcal{G}_{s}$ (lines 2-14), where in each iteration, the focus is solely on the representative nodes. Each iteration primarily involves two key steps: (1) connecting each node $x_i$ in $\mathcal{C}_s$ to its nearest neighbor $\mathcal{N}_{\mathcal{C}_s}(x_i)$ in $\mathcal{C}_s$ (lines 3-5) and (2) updating the representative node set $\mathcal{C}_s$ based on the connection results (lines 6-13). \textsc{DSInit} first identifies all RNNs pairs in $\mathcal{C}_s$ and resets the representative node set (lines 6-7). Then, for each pair of RNNs, \textsc{DSInit} selects the node with the higher in-degree as the representative node and adds it to the updated representative node set, while removing the edge from the node with the higher in-degree to the node with the lower in-degree (lines 8-13); if in-degrees are equal, a node is chosen randomly. 
% If the in-degrees are equal, the algorithm randomly selects one of the two nodes as the representative to execute lines 9-13. 
The iteration continues until only a single representative remains in $\mathcal{C}_s$.

% 更新是这么做的：首先，找到代表点集合Cs中的所有互惠最近点对，并重置代表点集合（lines 6-7）。在此基础上，将每一组互惠最近点对中的入度较大者作为代表节点放入代表点集合，并删除入度较大者指向入度较小者的这条边（lines 8-13）。注意，如果出现入度相同的情况，则从互惠最近点对的二者中随机选一个视作度较大者执行lines 9-13。This iteration persists until only a single representative remains in $\mathcal{C}_s$. 

% on points the disconnection of the outgoing edge from the node with the maximum in-degree within each pair of RNNs (lines 6-13). During each iteration, the node with the higher in-degree within each RNN is incorporated into the updated representative set $\mathcal{C}_s$ (line 9); if in-degrees are equal, a node is chosen randomly. This iteration persists until only a single representative remains in $\mathcal{C}_s$. \looseness=-1

% Sorting out the \textsc{DSInit} algorithm, we can find that \textsc{DSInit} facilitates the strategic organization of the data skeleton, where edges bearing significant weight (indicative of substantial distances between nodes) are strategically positioned either near the root or the extremities of the clustering tree. This unique configuration enables users to efficiently separate clusters or swiftly purge inaccuracies from clusters in subsequent phases.

Sorting out the \textsc{DSInit} algorithm, we can find that \textsc{DSInit} facilitates the strategic organization of the data skeleton,  which forms a tree-shaped graph. This single connected graph has $n-1$ edges, a unique root (the final representative node, which is the only node with no outgoing edges), and contains no cycles. Additionally, the data skeleton is a hierarchical, nested clustering result. It is formed by iteratively merging nodes (or clusters) and ensuring that the relationships between them are organized in a parent-child manner. Moreover, this tailored tree has a unique configuration, where edges with significant weights (indicating large distances between nodes) are positioned near either the root or the extremities of the clustering tree. This enables the algorithm to efficiently separate clusters or swiftly eliminate inaccuracies from clusters in subsequent stages.

\begin{algorithm}[t]
    \caption{\textsc{DSInit}} 
    \label{alg:DSInit}
    {\scriptsize
    \begin{algorithmic}[1] 
        \Statex \textbf{Input}: $\mathcal{X}$=$\{x_i\}_1^n$
        \Statex \textbf{Output}: Data Skeleton: $\mathcal{G}_{s}$
        \State $\mathcal{G}_{s} \gets (\mathcal{V}_s =$ $\mathcal{X}$$,\mathcal{E}_s = \varnothing,\mathcal{C}_s = \mathcal{X})$; 
        \While{$|\mathcal{C}_s|>1$} 
            % \State {\color{red}RNNs$(\mathcal{C}_s) = \varnothing$}
            \For{\textbf{each} $x_i$ \textbf{in} $\mathcal{C}_s$}
                \State$\mathcal{E}_{s} \gets \mathcal{E}_{s} \cup \{(\langle x_i, {\mathcal{N}_{\mathcal{C}_s}} (x_i)\rangle, {\rm{dist}}(x_i,{\mathcal{N}_{\mathcal{C}_s}}(x_i)))\}$
                % {\color{red}\If {$(\langle \mathcal{N}(x_i), x_i \rangle, {\rm{dist}}(\mathcal{N}(x_i), x_i))\in \mathcal{E}_s$}
                % \State{RNNs$(\mathcal{C}_s) \gets$ RNNs$(\mathcal{C}_s) \cup (x_i, \mathcal{N}(x_i))$} 
                % \EndIf}
            \EndFor
            \State{RNNsSet$_{\mathcal{C}_s} = \{(x_i,x_j)|e_{ij}\in\mathcal{E}_{s} \land e_{ji}\in\mathcal{E}_{s}, \; x_i,x_j \in \mathcal{C}_s\}$}
            \State{$\mathcal{C}_s \gets \varnothing$}
            \For{\textbf{each} $(x_i,x_j)$ \textbf{in} {RNNsSet$_{\mathcal{C}_s}$}}
             % \For{\textbf{each} $(x_i,x_j)$ \textbf{in} RNNs$(\mathcal{G}_s)$} \hfill \textcolor{blue}{\footnotesize $\triangleright$ RNNs$(\mathcal{G}_s)=\{(x_i,x_j)|e_{ij}\in\mathcal{E}_{s} \land e_{ji}\in\mathcal{E}_{s}\}$}
                \State $\mathcal{C}_s \gets \mathcal{C}_s \cup \{\arg\max\limits_{x_r \in \{x_i, x_j\}} d_{in}(x_r)\}$
                \State \textbf{if} {$d_{in}(x_i) > d_{in}(x_j)$} \textbf{then}
                 $\mathcal{E}_{s} \gets \mathcal{E}_{s}\backslash \{e_{ij}\}$
                \State \textbf{else} $\mathcal{E}_{s} \gets \mathcal{E}_{s} \backslash \{e_{ji}\}$
                \State \textbf{endif}
            \EndFor
        \EndWhile

    \State \textbf{return} $\mathcal{G}_{s}$
    \end{algorithmic}
    }
\end{algorithm}

\begin{example}
\label{example: DSInit}
\rm
As illustrated in Figure \ref{fig:DSInit}(a), we demonstrate the data skeleton initialization using \textsc{DSInit} with 10 nodes as an example. In the first iteration, each data point stands as a distinct connected subgraph, assuming the role of its representative. Subsequently, representatives are interconnected with their nearest neighbors, characterized by the edge weights reflecting the inter-node distances, as demonstrated in Figure \ref{fig:DSInit}(b). \textsc{DSInit} then isolates all reciprocal nearest node pairs. From these pairs, nodes with the maximum in-degree are selected as the respective connected subgraph's representatives. This selection yields three representative nodes, highlighted in additional circles in Figure \ref{fig:DSInit}(c).  As depicted in Figure \ref{fig:DSInit}(d), after successively connecting representatives to their nearest neighbors and determining new representatives in the subsequent iterations, the initialization of the data skeleton through \textsc{DSInit} is completed. \looseness=-1
\end{example}

\begin{figure}[h]
    \begin{center}
    \includegraphics[width=1.0\linewidth]{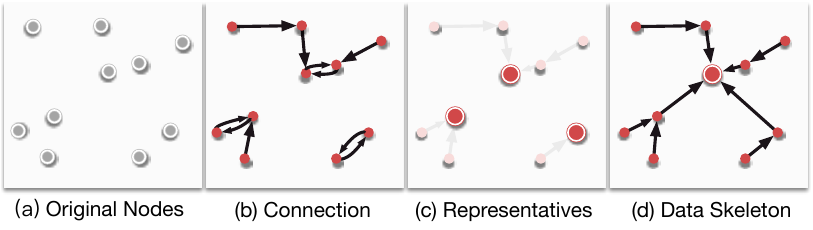}
    \caption{Data Skeleton Initialization}
    \label{fig:DSInit}
    \end{center}
\end{figure}

\subsection{Data Skeleton Reconstruction} 

Building upon the constructed data skeleton, the \textsc{Recons} algorithm is designed to refine the data skeleton to improve nested clustering results. It operates by selecting the edge with the greatest weight in $\mathcal{G}_s$ (indicating the longest distance between nodes) and treats the corresponding node pair as the most suspicious for potential separation. As we mentioned before, edges bearing significant weight are strategically positioned either near the root or the extremities of the clustering tree. This enables users to preferentially separate clusters or purge inaccuracies from clusters. Deductions are then made using the minimal constraint graph $\mathcal{G}_c$  (depending on Algorithm \ref{alg:PCDeduction}). If the deduction suggests the node pair should be disconnected, the source node of the edge is then matched with a more suitable target within the representative node set $\mathcal{C}_s$, determined again by deduction. This process may result in either the transfer of a cluster's branch to another cluster or the establishment of the source node as a new representative point, forming an independent nested cluster if no suitable target is found in $\mathcal{C}_s$.

\begin{algorithm}[tb]
    \caption{\textsc{Recons}}
    \label{alg:DSRECons}
    {\scriptsize
    \begin{algorithmic}[1]
        \Statex \textbf{Input}: Data Skeleton: $\mathcal{G}_{s}=(\mathcal{V}_{s},\mathcal{E}_{s},\mathcal{C}_{s})$, Minimal Constraint Graph: $\mathcal{G}_{c}=(\mathcal{V}_{c},\mathcal{E}_{c})$
        \Statex \textbf{Output}: Updated $\mathcal{G}_{s}$ and $\mathcal{G}_{c}$
        \State $e_{ij} \gets \arg\max_{e_{ij} \in \mathcal{E}_{s}}(w_{ij})$
        \State $\tau_{ij} \gets$ \textsc{Deduction}$(x_i, x_j, \mathcal{G}_{c})$ \hfill \textcolor{blue}{\scriptsize $\triangleright$ Algorithm {\ref{alg:PCDeduction}}}
        \If{$\theta_{ij}$ \textbf{is} 0}
        \State Update $\mathcal{E}_{s}$ with $w_{ij}\gets 0$ 
        \Else
            \State $\mathcal{E}_{s}\gets \mathcal{E}_{s}\backslash\{e_{ij}\}$;
            $\mathit{trigger}$ $\gets $ False
            % \For{\textbf{each} $x_r=\arg\min\limits_{x_r \in \mathcal{C}_s}\text{dist}(x_i,x_r)$ \textbf{in} $\mathcal{C}_s$}

            \For{\textbf{each} $x_r \textbf{ in } \text{argsort}_{x_r \in \mathcal{C}_s}\text{dist}(x_i,x_r)$}
            
                \State $\tau_{ir} \gets$ \textsc{Deduction}$(x_i, x_r, \mathcal{G}_c)$  \hfill\textcolor{blue}{\scriptsize $\triangleright$ Algorithm {\ref{alg:PCDeduction}}}
                \If {$\theta_{ir}$ \textbf{is} 0}
                    % \State $x_r\gets \arg\max_{x\in \{x_i,x_r\}} d_{in}(x)$
                    \State $(x_r,x_i) \gets (\arg\max\limits_{x\in \{x_i,x_r\}} d_{in}(x),  \arg\min\limits_{x\in \{x_i,x_r\}} d_{in}(x))$  \hfill\textcolor{blue}{\scriptsize $\triangleright$ Reassign $x_r$,$x_i$}
                    % \State \textcolor{orange}{Reassign $x_r$ and $x_i$ ($x_r$ with higher $d_{in}(x)$ and $x_i$ with lower $d_{in}(x)$)}
                    \State Update $\mathcal{C}_{s}$ with $\mathcal{C}_s \cup \{x_r\} \backslash \{x_i\}$
                    \State Update $\mathcal{E}_{s}$ with $\{e_{ir}=(\langle x_i, x_r\rangle, w_{ir}\gets0)\}$
                    \State $\mathit{trigger}$ $\gets$ True; \textbf{break}
                \EndIf
            \EndFor
            \If {$\mathit{trigger}$ \textbf{is} False}
            \State $\mathcal{C}_s$ $\gets$ $\mathcal{C}_s$ $\cup$ $\{x_i\}$
            \EndIf
        \EndIf
        \State \textbf{return} $\mathcal{G}_{s}$, $\mathcal{G}_{c}$ 
    \end{algorithmic}
    }
\end{algorithm}

The algorithmic steps of the \textsc{Recons} algorithm are illustrated in Algorithm \ref{alg:DSRECons}. \textsc{Recons} processes the data skeleton $\mathcal{G}_{s}$, the minimal constraint graph $\mathcal{G}_{c}$, and the pairwise constraint $\tau_{ij}$ to produce an updated version of both $\mathcal{G}_{s}$ and $\mathcal{G}_{c}$.
\textsc{Recons} commences by pinpointing the node pair $(x_i,x_j)$ most susceptible to alterations (line 1). Drawing from the constraint graph, \textsc{Recons} deduces the respective pairwise constraints $\tau_{ij}$ (line 2).
The subsequent course of action taken by \textsc{Recons} hinges on the intrinsic character of this deduced constraint. Specifically, for a must-link constraint (lines 3-4), where $\theta_{ij}=0$, it updates the data skeleton $\mathcal{G}_{s}$ by assigning a zero weight to the edge between nodes $x_i$ and $x_j$, indicating that their initial association is apt and without suspicion. 
On the flip side, in the case of a cannot-link constraint (lines 5-19), \textsc{Recons} aims to find an appropriate node in $\mathcal{C}_s$ to connect with $x_i$. To this end, it deletes $x_i$'s current outgoing edge and introduces a $\mathit{trigger}$ variable to track if $x_i$ has successfully connected to a new node (line 6). 
Subsequently, the algorithm inspects nodes in $\mathcal{C}_s$ prioritizing those closer to $x_i$, until it identifies a fitting node (lines 7-16). During this traversal, \textsc{Recons} employs the \textsc{Deduction} function on $x_i$ to determine its relation with $x_r$ (line 8). When $\tau_{ir}$ corresponds to a must-link, i.e., $\theta_{ir}=0$, the algorithm delves into an updating phase (lines 9-15). However, if $\tau_{ir}$ suggests a cannot-link, the traversal persists. 
The updating phase involves several key actions: selecting the node with the maximum in-degree between $x_i$ and $x_r$ to act as the representative (lines 10-11); assigning a zero weight to the edge between $x_i, x_r$ and targeting the node with the higher in-degree (line 12); and flagging the $\mathit{trigger}$ as True, signifying a successful connection of $x_i$ to a representative in $\mathcal{C}_{s}$ (line 13).
If the connection attempt for $x_i$ fails, it gets designated as a new independent representative node (lines 16-17). Following the resolution of the above cases, the updated $\mathcal{G}_{s}$ and $\mathcal{G}_{c}$ are returned as output (line 20). Note that $\mathcal{G}_{c}$ is updated by \textsc{Deduction}.
To offer a deeper insight into the workings of \textsc{Recons}, we provide a visual demonstration in Example \ref{example: reconstruction}.

\begin{figure}[h]
    \centering
    \includegraphics[width=1.0\linewidth]{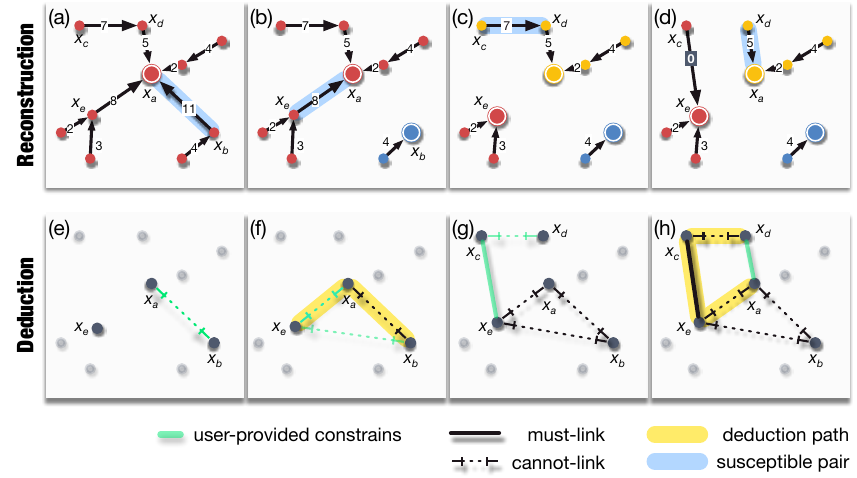}
    \caption{Constraint Deduction v.s. Data Skeleton Reconstruction}
    \label{fig:DSRECons}
\end{figure}

\begin{example}
\label{example: reconstruction}
\rm
Figure \ref{fig:DSRECons} presents the \textsc{Recons} applied to the data skeleton from Example \ref{example: DSInit} across four iterations. Nodes $x_b$, $x_e$, $x_c$, and $x_d$ are sequentially updated based on the edge weights, as illustrated in Figures \ref{fig:DSRECons}(a-d): $x_b$ emerged as a new representative after disconnecting its prior link; $x_e$ emerged as new representative as well; $x_c$ reattaches to the nearest representative node $x_e$; $x_d$ retains its link. Concurrent updates to the minimal constraint graph, as shown in Figures \ref{fig:DSRECons}(e-h), are driven by user-provided constraints. A detailed version of this example is provided in the supplementary materials.
\end{example}

\subsection{Pairwise Constraint Deduction}
We then delve into the \textsc{Deduction} algorithm, tailored for determining if two nodes belong to the same cluster. As depicted in Algorithm \ref{alg:PCDeduction}, \textsc{Deduction} is based on the proposed Theorem \ref{theorem:theorem1} (lines 1-5). If Theorem \ref{theorem:theorem1} cannot derive the constraint, human annotation is then requested (lines 6-9). \looseness-1

\begin{algorithm}[ht]
    \caption{\textsc{Deduction}} 
    \label{alg:PCDeduction}
    {\scriptsize
    \begin{algorithmic}[1] 
        \Statex \textbf{Input}: Nodes: $x_i,x_j$, Minimal Constraint Graph: $\mathcal{G}_{c}$=$(\mathcal{V}_{c},\mathcal{E}_{c})$
        \Statex \textbf{Output}: Pairwise Constraint: $\tau$
        \State $\tau_{ij} \gets ((x_i,x_j),\theta_{ij}=$\text{None})
        \State $\mathcal{P}_{ij}$ $\gets$ Find the shortest path between $x_i$ and $x_j$ in $\mathcal{G}_{c}$
        \If{$\mathcal{P}_{ij}$ \textbf{is not} None}
            \State Update $\tau_{ij}$ with Eq. (\ref{equ:equ3}) \hfill \textcolor{blue}{\scriptsize $\triangleright$ {Theorem \ref{theorem:theorem1}}}
        \EndIf
        \If{$\theta_{ij}$ \textbf{is} None}
        \State $\tau_{ij} \gets \textsc{Query}(x_i,x_j)$ \hfill \textcolor{blue}{\scriptsize $\triangleright$ \text{Send a human annotation request for the constraint}}
        \State{Update $\mathcal{G}_{c}$ with $\tau_{ij}$} 
        \EndIf
        \State \textbf{return} $\tau_{ij}$
    \end{algorithmic}
    }
\end{algorithm}

%\vspace{-0.5em}
\begin{theorem}
\label{theorem:theorem1}
Given a minimal constraint graph $\mathcal{G}_{c}$ with nodes $x_i$ and $x_j$ for which the pairwise constraint $\tau_{ij} = ((x_i,x_j),\theta_{ij})$ needs to be inferred. By treating Boolean constraints as edge weights of 0 or 1, if the shortest path $\mathcal{P}_{ij}$ exists between them, the pairwise constraint can be deduced as follows:
{\rm
\scriptsize
\begin{equation}
\label{equ:equ3}
\text{$\theta_{ij}$} \gets \begin{cases}
    0 \text{(i.e., must-link)}, & |\mathcal{P}_{ij}| = 0 \\
    1 \text{(i.e., cannot-link)}, & |\mathcal{P}_{ij}| = 1 \\
    \text{None}, & |\mathcal{P}_{ij}| > 1
\end{cases},
\end{equation}
}

\noindent where $|\mathcal{P}_{ij}|$ refers to the shortest path length, None represents an inability to deduce. 
\end{theorem}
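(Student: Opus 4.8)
The plan is to interpret the minimal constraint graph $\mathcal{G}_c$ as a $\{0,1\}$-weighted graph, where a must-link edge ($\theta = 0$) has weight $0$ and a cannot-link edge ($\theta = 1$) has weight $1$, and then argue that the length of the shortest path $\mathcal{P}_{ij}$ in this weighting encodes precisely the forced relationship between $x_i$ and $x_j$. The key semantic observation is that must-link is transitive (a path of all must-links forces a must-link) and that a must-link composed with a single cannot-link forces a cannot-link, whereas two or more cannot-links along a path carry no forcing information (the endpoints could be in the same or different clusters). So I would first establish these three "composition rules" for chains of constraints as the logical backbone.

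First I would set up the correspondence: for any path $\pi$ in $\mathcal{G}_c$ from $x_i$ to $x_j$, define its weight as the sum of the edge weights ($0$ for must-link, $1$ for cannot-link), which equals the number of cannot-link edges on $\pi$. Then I would prove by induction on the path length that: (i) a path of weight $0$ (all must-links) forces $x_i$ and $x_j$ into the same cluster; (ii) a path of weight $1$ (exactly one cannot-link, the rest must-links) forces them into different clusters; (iii) for weight $\geq 2$ no constraint is forced, which I would justify by exhibiting two consistent cluster assignments — one placing $x_i,x_j$ together and one separating them — both compatible with all constraints along the path. Cases (i) and (ii) use transitivity of the equivalence relation induced by must-links plus the fact that cannot-link is well-defined on the must-link quotient.

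Next I would connect the shortest path to these cases. If $|\mathcal{P}_{ij}| = 0$, the shortest path has weight $0$, so by (i) we deduce must-link. If $|\mathcal{P}_{ij}| = 1$, then every path has weight $\geq 1$ and at least one path has weight exactly $1$; by (ii) that path forces a cannot-link, and no path can contradict it since the constraints are assumed consistent, so we deduce cannot-link. If $|\mathcal{P}_{ij}| > 1$, then every path has weight $\geq 2$, so no single path forces anything by (iii); here I would need the slightly stronger claim that the \emph{collection} of all constraints on all paths still fails to force a relationship, which again follows by exhibiting two globally consistent assignments — this is where I would lean on the minimality of $\mathcal{G}_c$ (it contains only user-provided constraints, no deduced ones, hence is consistent) and construct the witnessing clusterings from the connected components of the subgraph of must-link edges.

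The main obstacle I anticipate is the case $|\mathcal{P}_{ij}| > 1$: showing that no relationship is forced requires more than analyzing one path — it requires arguing that the entire constraint set is jointly satisfiable both with $x_i \sim x_j$ and with $x_i \not\sim x_j$. I would handle this by contracting all must-link edges to form a quotient graph whose nodes are must-link-equivalence-classes and whose edges are cannot-links; the hypothesis $|\mathcal{P}_{ij}| > 1$ means $x_i$ and $x_j$ lie in distinct quotient nodes at cannot-link-distance $\geq 2$, and then a proper $2$-coloring-style argument (or simply assigning fresh distinct cluster labels to components) produces one clustering separating them, while merging their two components into one cluster — legitimate precisely because no cannot-link directly joins them — produces one clustering uniting them. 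The remaining steps (the induction for weights $0$ and $1$) are routine.
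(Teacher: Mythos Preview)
Your proposal is correct and shares the paper's overall skeleton: both arguments treat the three cases $|\mathcal{P}_{ij}|=0,1,>1$ separately, and both derive the first two cases from the transitivity of must-link and the entailment rule for a single cannot-link (the paper states these as two cited lemmas and two corollaries obtained by chaining them along the path, exactly as in your inductive step).

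Where you diverge is in the $|\mathcal{P}_{ij}|>1$ case. The paper argues only \emph{path-locally}: it observes that a three-node chain with two cannot-links is indeterminate (via an informal analogy to collinear points) and then extends this ``cascading uncertainty'' along longer paths; a separate lemma, proved by contrapositive, then says that any non-shortest path has weight $>1$ and hence cannot yield a deduction. You instead give a \emph{global, model-theoretic} argument: contract must-link components to a quotient graph, note that $x_i$ and $x_j$ land in distinct quotient nodes with no direct cannot-link between them, and exhibit two consistent clusterings --- one separating all components, one merging just those two --- witnessing that neither relation is forced. Your version is strictly stronger: it handles the concern you correctly flagged (that the totality of constraints, not just one path, might force a relation), which the paper's per-path reasoning does not squarely address. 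The paper's route is shorter but leans on the implicit assumption that path-wise indeterminacy suffices; your quotient construction makes the independence claim fully rigorous at modest extra cost.
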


\newcounter{tempcounter1}
\setcounter{tempcounter1}{\value{theorem}}

\noindent\textit{Proof.} To prove this theorem, we delve into the correctness of Equation (\ref{equ:equ3}), and the essentiality of the shortest path. First of all, we cite two pivotal Lemmas.

\begin{lemma}
Transitivity of Must-Link Constraints \cite{basu2008constrained}. Given three nodes $x_a$, $x_b$, and $x_c$, if there exists a must-link constraint between $x_a$ and $x_b$, and another must-link constraint between $x_b$ and $x_c$, then a must-link constraint is also inferred between $x_a$ and $x_c$, i.e.,
\begin{equation}
\scriptsize
    ((x_a,x_b),0)  \wedge ((x_b,x_c),0) \Rightarrow ((x_a,x_c),0)
\end{equation}
\end{lemma}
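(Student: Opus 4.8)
The plan is to prove the lemma by unfolding the semantics of a must-link constraint into an equality of cluster labels and then invoking the transitivity of equality. I would first fix the interpretation under which a must-link constraint is \emph{satisfied}: a clustering of $\mathcal{X}$ is a label assignment $C$ sending each point to a cluster label, and a constraint $((x,y),0)$ is satisfied by $C$ precisely when $C(x)=C(y)$. A must-link is then \emph{inferred} (entailed) by a set of constraints $S$ when every clustering $C$ consistent with $S$ also satisfies that must-link; this is the sense in which the implication $\Rightarrow$ in the statement must be read.

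With this setup the argument reduces to three short steps. First, I would fix an arbitrary clustering $C$ that satisfies both hypothesized constraints and translate them into label equalities, namely $C(x_a)=C(x_b)$ from $((x_a,x_b),0)$ and $C(x_b)=C(x_c)$ from $((x_b,x_c),0)$. Second, I would apply the transitivity of equality to chain these into $C(x_a)=C(x_b)=C(x_c)$, hence $C(x_a)=C(x_c)$. Third, since $C(x_a)=C(x_c)$ is exactly the satisfaction condition for $((x_a,x_c),0)$ and $C$ was chosen arbitrarily among clusterings consistent with the premises, the must-link between $x_a$ and $x_c$ is entailed, which is the desired conclusion.

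The essential content of the lemma is therefore just the transitivity axiom of the relation ``belongs to the same cluster as'', so I do not expect any genuine combinatorial or analytic obstacle. The only point that demands care — and the one I would state explicitly — is the quantification hidden in the word ``inferred'': the conclusion must hold for \emph{every} admissible clustering, not merely for a single fixed one, so the reasoning is carried out for an arbitrary consistent $C$ and the conclusion follows by its arbitrariness. I would additionally note that the reflexive and symmetric counterparts hold under the same label-equality reading, so that must-link constraints generate a bona fide equivalence relation; this ties the lemma to its intended use, where the transitive closure of user-provided must-links drives the deduction step.
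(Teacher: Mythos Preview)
Your argument is correct; unfolding a must-link into the label equality $C(x)=C(y)$ and then invoking the transitivity of $=$ is exactly the right way to justify the implication, and your remark about quantifying over all consistent clusterings is the only subtlety worth flagging.

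That said, the paper does not actually prove this lemma at all: it merely quotes it as a known fact from the constrained-clustering literature (the citation to \cite{basu2008constrained}) and then uses it as a black box in the subsequent corollaries. So there is no ``paper's own proof'' to compare against; your writeup supplies the justification that the authors simply import.
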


\begin{lemma}
Entailment of Cannot-Link Constraints \cite{basu2008constrained}. Given three nodes $x_a$, $x_b$, and $x_c$, if there is a must-link constraint between $x_a$ and $x_b$, but a cannot-link constraint between $x_b$ and $x_c$, then $x_a$ and $x_c$ must not be linked, i.e.,
\begin{equation}
\scriptsize
    ((x_a,x_b),0)  \wedge ((x_b,x_c),1) \Rightarrow ((x_a,x_c),1)
\end{equation}
\end{lemma}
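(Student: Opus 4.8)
The plan is to establish the entailment property from the set-theoretic semantics of pairwise constraints, reading a must-link as the assertion that two nodes share a cluster and a cannot-link as the assertion that they do not. Concretely, I would fix an arbitrary ground-truth partition of the dataset and encode it by a cluster-assignment map $C:\mathcal{X}\to\{1,\dots,k\}$, under which a must-link constraint $((x_a,x_b),0)$ is faithful iff $C(x_a)=C(x_b)$, and a cannot-link constraint $((x_b,x_c),1)$ is faithful iff $C(x_b)\neq C(x_c)$. The goal is then to show that every $C$ satisfying both hypotheses must also satisfy $C(x_a)\neq C(x_c)$, so that $((x_a,x_c),1)$ is forced regardless of the underlying labeling.

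The first step is the translation: from $((x_a,x_b),0)$ I would read off $C(x_a)=C(x_b)$, and from $((x_b,x_c),1)$ I would read off $C(x_b)\neq C(x_c)$. The second step is a single substitution using transitivity of equality: replacing $C(x_b)$ by $C(x_a)$ in the inequality yields $C(x_a)\neq C(x_c)$, which is precisely the semantic content of the claimed cannot-link $((x_a,x_c),1)$. Since $C$ was arbitrary among the partitions consistent with the premises, the entailment holds for every admissible labeling, which is exactly what ``must not be linked'' means.

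As an alternative, and to tie the argument to the transitivity of must-link constraints established just above, I would also sketch a purely syntactic proof by contradiction. Suppose, toward a contradiction, that $x_a$ and $x_c$ were must-linked, i.e. $((x_a,x_c),0)$. Using the symmetry of must-link together with the transitivity lemma applied to $((x_b,x_a),0)$ and $((x_a,x_c),0)$ would force $((x_b,x_c),0)$, contradicting the hypothesis $((x_b,x_c),1)$, since no pair can be simultaneously must-linked and cannot-linked under a single partition. Because every pair is either must-linked or cannot-linked under a fixed partition, ruling out the must-link alternative leaves exactly $((x_a,x_c),1)$.

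The main obstacle is conceptual rather than computational: I must make the asymmetry of the hypotheses explicit and caution that this entailment does not generalize to two cannot-links. Cannot-link is not transitive --- if $x_a,x_b$ lie in different clusters and $x_b,x_c$ lie in different clusters, $x_a$ and $x_c$ may still share a cluster --- so the proof must use precisely one must-link and one cannot-link, and the substitution step succeeds only because it is equality (must-link), not inequality (cannot-link), that is being propagated. Flagging this is important because the downstream \textsc{Deduction} mechanism relies on chaining must-links while treating a single cannot-link edge as terminal, and this lemma is exactly what licenses that asymmetric treatment in Equation (\ref{equ:equ3}).
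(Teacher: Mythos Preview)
Your proof is correct, but there is nothing to compare it against: the paper does not prove this lemma at all. It is one of two ``pivotal Lemmas'' that the authors explicitly \emph{cite} from \cite{basu2008constrained} without proof, and then use as black boxes in the derivation of Corollaries \ref{corollary:corollary1}--\ref{corollary:corollary2} and ultimately Theorem \ref{theorem:theorem1}.

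Your semantic argument via a cluster-assignment map $C$ and the alternative contradiction argument via must-link transitivity are both sound and standard. The cautionary remark about non-transitivity of cannot-link is well placed and in fact anticipates exactly the ``None'' case of Equation (\ref{equ:equ3}) that the paper handles separately in Corollary 7. So you have supplied more than the paper does here; just be aware that in the paper's presentation this lemma is treated as background, not as something requiring justification.
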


On the basis of these two Lemmas, we provide two corollaries.

\begin{corollary}
\label{corollary:corollary1}
Given any deduction path composed of nodes $\{x_i, x_{i+1}, \cdots,$ $ x_{i+m-1}, x_{i+m}\}$, if every edge on the path is constrained by a must-link constraint, that is $|\mathcal{P}_{i,i+m}| = 0$, then we have:
{\scriptsize
\begin{align*}
    ((x_i, x_{i+1}), 0) \land ((x_{i+1}, x_{i+2}), 0) &\Rightarrow ((x_i, x_{i+2}), 0),\\
    ((x_i, x_{i+2}), 0) \land ((x_{i+2}, x_{i+3}), 0) &\Rightarrow ((x_i, x_{i+3}), 0),\\
    % ((x_i, x_{i+3}), 0) \land ((x_{i+3}, x_{i+4}), 0) &\Rightarrow ((x_i, x_{i+4}), 0),\\
    \cdots&\\
    ((x_i, x_{i+m-1}), 0) \land ((x_{i+m-1}, x_{i+m}), 0) &\Rightarrow ((x_i, x_{i+m}), 0),
\end{align*}
}

\noindent i.e., $\theta_{i,i+m}=0$. 
\end{corollary}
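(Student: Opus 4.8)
The plan is to prove the corollary by a straightforward induction on the path length $m$, whose only engine is the transitivity of must-link constraints stated in the first lemma above. Before the induction I would clarify the hypothesis: since every edge on the deduction path carries weight $0$ (a must-link) and all edge weights in $\mathcal{G}_c$ lie in $\{0,1\}$, the condition $|\mathcal{P}_{i,i+m}| = 0$ is precisely equivalent to the statement that each of the $m$ consecutive edges is a must-link, i.e. $((x_{i+t},x_{i+t+1}),0)$ holds for every $t = 0,\dots,m-1$. This reframing converts the displayed chain of implications into the successive steps of a single induction rather than a list of independent assertions.

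For the base case ($m=1$) the claim $((x_i,x_{i+1}),0)$ is simply the hypothesis on the first edge, so nothing further is required. For the inductive step I would assume that the first $m-1$ edges have already collapsed into one must-link, i.e. that $((x_i,x_{i+m-1}),0)$ holds; this is the induction hypothesis. Combining it with the must-link on the final edge $((x_{i+m-1},x_{i+m}),0)$ and invoking the transitivity lemma with $x_a = x_i$, $x_b = x_{i+m-1}$, $x_c = x_{i+m}$ yields $((x_i,x_{i+m}),0)$, which is exactly the last implication in the displayed chain and gives $\theta_{i,i+m} = 0$. The intermediate implications listed in the statement are nothing but the instantiations of this same step for $m = 2,3,\dots$, so the telescoping chain is justified term by term.

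I expect no substantial obstacle: the result is a clean induction and the transitivity lemma performs all of the combinatorial work. The only point warranting care is the bookkeeping that identifies ``shortest-path length equal to $0$'' with ``every edge a must-link'' — I would make this explicit at the outset, both so the premise $|\mathcal{P}_{i,i+m}| = 0$ is correctly unpacked into the $m$ edgewise must-link hypotheses that feed the induction, and so that the nonnegativity of the weights rules out any hidden cannot-link edge (weight $1$) appearing on a weight-$0$ path.
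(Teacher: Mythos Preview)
Your proposal is correct and is essentially the same as the paper's own argument: the paper simply displays the telescoping chain of implications obtained by repeatedly applying the must-link transitivity lemma along the path, and your write-up formalizes that same chain as an induction on $m$. The extra care you take in unpacking $|\mathcal{P}_{i,i+m}|=0$ as ``every edge has weight $0$'' is a welcome clarification but does not change the underlying approach.
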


Corollary \ref{corollary:corollary1} implies that the transitivity of must-link constraints along the path allows us to infer a direct must-link constraint between the start and end nodes of the path.

\begin{corollary}
\label{corollary:corollary2}
Given any deduction path composed of nodes $\{x_i, x_{i+1}, \cdots, x_{i+m-1},$ $ x_{i+m}\}$, if there is only one edge with its two nodes $x_{i+k}, x_{i+k+1}$ on the path constrained by a cannot-link and the rest are must-link constraints, that is $|\mathcal{P}_{i,i+m}| = 1$, then we have:
{\scriptsize
\begin{align*}
    ((x_i, x_{i+1}), 0) \land ((x_{i+1}, x_{i+2}), 0) &\Rightarrow ((x_i, x_{i+2}), 0),\\
    % ((x_i, x_{i+2}), 0) \land ((x_{i+2}, x_{i+3}), 0) &\Rightarrow ((x_i, x_{i+3}), 0),\\
    \cdots&\\
    ((x_i, x_{i+k}), 0) \land ((x_{i+k}, x_{i+k+1}), 1) &\Rightarrow ((x_i, x_{i+k+1}), 1),\\
    ((x_i, x_{i+k+1}), 1) \land ((x_{i+k+1}, x_{i+k+2}), 0) &\Rightarrow ((x_i, x_{i+k+2}), 1),\\
    \cdots&\\
    ((x_i, x_{i+m-1}), 1) \land ((x_{i+m-1}, x_{i+m}), 0) &\Rightarrow ((x_i, x_{i+m}), 1),
\end{align*}
}

\noindent i.e., $\theta_{i,i+m}=1$. 
\end{corollary}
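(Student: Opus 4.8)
The plan is to prove Corollary \ref{corollary:corollary2} by decomposing the path at its unique cannot-link edge and then chaining the two cited Lemmas together through a short induction. Concretely, I would split the node sequence $\{x_i, \ldots, x_{i+m}\}$ into three parts: a must-link \emph{prefix} $x_i, \ldots, x_{i+k}$ in which every edge is a must-link, the single cannot-link \emph{pivot} edge $(x_{i+k}, x_{i+k+1})$, and a must-link \emph{suffix} $x_{i+k+1}, \ldots, x_{i+m}$ in which every edge is again a must-link. The displayed chain of implications in the statement then corresponds exactly to processing these three parts in order, so the task reduces to justifying each block of implications from the two Lemmas and the already-established Corollary \ref{corollary:corollary1}.

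For the prefix, I would simply invoke Corollary \ref{corollary:corollary1}: since the subpath $x_i, \ldots, x_{i+k}$ consists entirely of must-link edges, its endpoints satisfy $((x_i, x_{i+k}), 0)$, which collapses the first block of the displayed derivation into a single constraint. For the pivot, I would apply the entailment lemma (entailment of cannot-link constraints) with $(x_a, x_b, x_c) = (x_i, x_{i+k}, x_{i+k+1})$: combining $((x_i, x_{i+k}), 0)$ with the unique cannot-link $((x_{i+k}, x_{i+k+1}), 1)$ yields $((x_i, x_{i+k+1}), 1)$. This establishes the middle line of the chain and provides the base case for the induction that handles the suffix.

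For the suffix, I would prove by induction on $j = 1, \ldots, m-k-1$ that $((x_i, x_{i+k+j}), 1)$ holds, with the base case $j=1$ supplied by the pivot step. In the inductive step I have the accumulated cannot-link $((x_i, x_{i+k+j}), 1)$ together with the next must-link edge $((x_{i+k+j}, x_{i+k+j+1}), 0)$, and I must conclude $((x_i, x_{i+k+j+1}), 1)$. Here lies the one genuine subtlety, and the step I expect to be the main obstacle: the entailment lemma is stated in the orientation must-link-then-cannot-link, whereas the suffix requires cannot-link-then-must-link. The fix is to exploit the symmetry of pairwise constraints (they are undirected, so $((x_a, x_b), \theta) = ((x_b, x_a), \theta)$) and to relabel the lemma as $(x_a, x_b, x_c) = (x_{i+k+j+1}, x_{i+k+j}, x_i)$, giving $((x_{i+k+j+1}, x_{i+k+j}), 0) \wedge ((x_{i+k+j}, x_i), 1) \Rightarrow ((x_{i+k+j+1}, x_i), 1)$, whose conclusion reads symmetrically as $((x_i, x_{i+k+j+1}), 1)$. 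Taking $j = m-k-1$ then delivers $((x_i, x_{i+m}), 1)$, i.e. $\theta_{i,i+m} = 1$, completing the proof. I would flag explicitly that this symmetry step is what licenses reversing the lemma's direction; once that is stated, the remainder is a routine telescoping of the two Lemmas along the path.
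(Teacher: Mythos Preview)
Your proposal is correct and follows essentially the same approach as the paper: the paper's ``proof'' is nothing more than the displayed chain of implications itself, which is a step-by-step application of Lemmas~3 and~4 along the path, exactly as you reconstruct it via the prefix/pivot/suffix decomposition and induction. You are in fact more careful than the paper, since the suffix lines in the displayed chain use Lemma~4 in the reversed orientation (cannot-link then must-link) without comment, and you correctly justify this via the symmetry of pairwise constraints---a step the paper silently assumes.
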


Corollary \ref{corollary:corollary2} indicates that the presence of a single cannot-link constraint along the path enforces a cannot-link relationship between the start and end nodes of the path. Thus, Corollaries \ref{corollary:corollary1} and \ref{corollary:corollary2} address the `0' and `1' cases, respectively. We next provide another corollary for the `None' case.

\begin{corollary}
Given any deduction path composed of nodes $\{x_i, x_{i+1}, \cdots, x_{i+m-1},$ $ x_{i+m}\}$, if the path contains multiple edges that are constrained by cannot-links, while the rest are constrained by must-links, that is $|\mathcal{P}_{i,i+m}| > 1$, then, according to Corollary 6, the path can always be transformed into one that contains only cannot-link constraints. This is because the initial deduction path can be divided into multiple sub-paths, each of which contains only one cannot-link constraint. In such a case, it is impossible to deduce the constraint between $x_i$ and $x_{i+m}$, i.e., $\theta_{i,i+m}=$ \rm{None}.  

% if $|\mathcal{P}_{i,i+m}| > 1$, it can always be transformed into a path only containing cannot-link constraints\textcolor{red}{, that is we cannot obtain the constraint between $x_i$ and $x_{i+m}$ by deducting, i.e., $\theta_{i,i+m}=$ \rm{None}}.   
\end{corollary}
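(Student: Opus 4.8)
\noindent\textit{Proof proposal.} The plan is to reduce the given shortest path to a canonical ``all cannot-link'' chain using Corollaries~\ref{corollary:corollary1} and~\ref{corollary:corollary2}, and then to show that such a chain of length at least two pins down no relation between its endpoints. First I would write the shortest path as $x_i = v_0, v_1, \dots, v_m = x_{i+m}$ and let $e_{j_1}, \dots, e_{j_\ell}$ be its cannot-link edges, where $\ell = |\mathcal{P}_{i,i+m}| \ge 2$ (since must-links carry weight $0$ and cannot-links weight $1$, the path length is exactly the number of cannot-link edges). Cutting the path immediately after each cannot-link edge yields the blocks $v_0\cdots v_{j_1},\; v_{j_1}\cdots v_{j_2},\; \dots,\; v_{j_{\ell-1}}\cdots v_{j_\ell}$ together with an all-must-link tail $v_{j_\ell}\cdots v_m$. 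Each of the first $\ell$ blocks is a path carrying exactly one cannot-link --- its last edge --- so Corollary~\ref{corollary:corollary2} deduces a single cannot-link between its endpoints, while the tail is all must-link, so Corollary~\ref{corollary:corollary1} deduces a must-link and $v_{j_\ell}$ may be identified with $x_{i+m}$. Concatenating these deductions, the original path is equivalent to a chain $u_0, u_1, \dots, u_\ell$ with $u_0 = x_i$ and $u_\ell$ must-linked to $x_{i+m}$ in which every consecutive pair is cannot-linked and $\ell \ge 2$.

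The core step is then to show that this chain entails no relation between $u_0$ and $u_\ell$. The only available inference rules are transitivity of must-link and the entailment $(\mathrm{ML}) \wedge (\mathrm{CL}) \Rightarrow (\mathrm{CL})$, and neither applies to two adjacent cannot-links, so the chain cannot be shortened further; to rule out any indirect derivation I would provide a semantic witness. For the ``different cluster'' reading, assign the $u_t$ pairwise distinct clusters, which trivially respects every cannot-link. For the ``same cluster'' reading, identify $u_0$ with $u_\ell$: the cannot-link chain then becomes the cycle $C_\ell$ (a doubled edge when $\ell = 2$), which is loopless precisely because $\ell \ge 2$ and the endpoints of a path are non-adjacent, hence it admits a proper colouring with finitely many colours; pulling this colouring back gives a consistent clustering in which $u_0 = u_\ell$. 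Since both a must-link and a cannot-link reading are consistent with the constraints, $\theta_{i,i+m}$ is underdetermined, i.e.\ equals $\mathrm{None}$. To obtain the claim for $\mathcal{G}_c$ as a whole rather than just along this one path, I would replay the colouring argument on the must-link contraction of $\mathcal{G}_c$: contracting each must-link component to a super-node gives a loopless graph $H$ whose edges are the cannot-links, and $|\mathcal{P}_{i,i+m}| > 1$ says exactly that $x_i$ and $x_{i+m}$ lie in distinct, non-adjacent super-nodes $U$ and $V$; colouring $H$ with all super-nodes distinct separates $U$ and $V$, while properly colouring the contraction $H / \{U,V\}$ (loopless, since $U$ and $V$ are non-adjacent) merges them, and both pull back to clusterings satisfying every user-provided constraint. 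A one-line remark then records why minimality is needed: a non-shortest path may carry two or more cannot-links even when the shortest path carries $0$ or $1$, so only the shortest path delivers the correct verdict in Eq.~(\ref{equ:equ3}).

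The block decomposition of the first paragraph is routine. The main obstacle is the non-deducibility argument: the naive construction that simply alternates between two clusters along the chain fails when $\ell$ is odd, so the witness must be produced in a parity-free way --- which is exactly what the ``identify the two endpoints, observe that a loopless cycle remains, and colour it'' device accomplishes --- and, if one wants the statement in its strongest form, one must additionally check that the two witness clusterings respect the remaining constraints of $\mathcal{G}_c$, which is handled cleanly by the must-link-contraction reformulation.
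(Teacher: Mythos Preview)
Your proposal is correct and takes a genuinely different route from the paper. The paper argues by an informal induction: it treats the base case $\ell=2$ by appeal to an analogy (``the geometric relation of three collinear points on a plane'') and then claims that if $\theta_{i,i+2}$ is undetermined, the uncertainty ``cascades'' along the chain so that $\theta_{i,i+k}$ is undetermined for all subsequent $k$. You instead give a semantic argument: after the same block decomposition, you exhibit two explicit clusterings --- one separating $x_i$ and $x_{i+m}$, one merging them --- both of which satisfy every constraint, by passing to the must-link contraction $H$ and observing that the hypothesis $|\mathcal{P}_{i,i+m}|>1$ is exactly the statement that the two relevant super-nodes are distinct and non-adjacent in $H$, so that both $H$ and $H/\{U,V\}$ are loopless and hence properly colourable. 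Your approach is more rigorous on two counts: it makes the base case an actual construction rather than an analogy, and it settles non-deducibility with respect to the \emph{entire} constraint graph $\mathcal{G}_c$, not merely along the single path under discussion (a point the paper's inductive argument does not address). The paper's version is shorter and stays closer to the Lemma~3/Lemma~4 syntax, but its inductive step (``unpredictable plus a cannot-link remains unpredictable'') is asserted rather than proved; your colouring witnesses close that gap cleanly, and the parity-free device of identifying the endpoints before colouring is exactly the right way to avoid the odd-$\ell$ pitfall you flag.
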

\begin{proof}
\textit{We first explore the case when $|\mathcal{P}_{i,i+2}| = 2$, with nodes $\{x_i, x_{i+1}, x_{i+2}\}$. Here, the constraint between $x_i$ and  $x_{i+2}$ is unpredictable. This scenario can be understood analogously through the geometric relation of three collinear points on a plane. Building on this, we further extend our consideration to a longer path, denoted as $\{x_i, x_{i+1}, \cdots, x_{i+m-1}, x_{i+m}\}$, where $|\mathcal{P}_{i,i+m}| > 2$. If the relationship between $x_i$ and  $x_{i+2}$ is unpredictable, then by extension, the relationship between $x_i$ and  $x_{i+3}$ also remains indeterminable. By recursive extension, the relationship between $x_i$ and any subsequent node $x_{i+k}$ along the path also remains unpredictable. This cascading uncertainty in linkage predicates that beyond a certain point in the path, the predictability of connections between nodes cannot be ascertained when multiple cannot-link constraints are present. In another words, $|\mathcal{P}_{i,i+m}| > 1$ represents an inability to deduce, resulting in $\theta_{i,i+m}=\rm{None}$. }
\end{proof}

Thus, the correctness of Equation (\ref{equ:equ3}) is proven. Next, we begin the proof of the essentiality of the shortest path.

\begin{lemma}
Essentiality of Shortest Path. Given a constraint graph and nodes $x_i, x_{i+m}$ therein, only the shortest path connecting these two nodes is the only path that guarantees successful deduction of the constraints between them.
\end{lemma}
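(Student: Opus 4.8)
The plan is to characterize exactly which paths permit a successful deduction, and then observe that minimality singles out the shortest one. Throughout I would use the standing assumption --- implicit in the deduction setting --- that the user-provided constraints are mutually consistent, i.e.\ realizable by some actual clustering. Write $|\mathcal{P}|$ for the number of cannot-link edges on a path $\mathcal{P}$, equivalently its length under the $0/1$ edge weights of Theorem~\ref{theorem:theorem1}.

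First I would package the earlier corollaries into a clean dichotomy: chaining the deduction rules along a path $\mathcal{P}$ from $x_i$ to $x_j$ terminates in a definite constraint if and only if $|\mathcal{P}|\le 1$. Indeed, Corollary~\ref{corollary:corollary1} forces $\theta_{ij}=0$ when $|\mathcal{P}|=0$, Corollary~\ref{corollary:corollary2} forces $\theta_{ij}=1$ when $|\mathcal{P}|=1$, and the $|\mathcal{P}|>1$ corollary shows the chain stalls at the second cannot-link (cannot-link $\wedge$ cannot-link has no valid consequent), returning None. So ``$\mathcal{P}$ guarantees a successful deduction'' is synonymous with ``$|\mathcal{P}|\le 1$''. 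Next I would bring in minimality: let $s=|\mathcal{P}_{ij}|$ be the length of a shortest path, so every path $\mathcal{P}$ has $|\mathcal{P}|\ge s$. If $s\ge 2$, then $|\mathcal{P}|\ge 2$ for every $\mathcal{P}$, so no path yields a deduction at all and certifying None off the shortest path --- as Algorithm~\ref{alg:PCDeduction} does --- is the only correct output. If $s\le 1$, the shortest path already succeeds, with value $\theta_{ij}=s$; I then argue any strictly longer path $\mathcal{P}$ fails. The only possible escape is $|\mathcal{P}|=1$ while $s=0$, which would deduce $\theta_{ij}=1$ along $\mathcal{P}$ but $\theta_{ij}=0$ along $\mathcal{P}_{ij}$; splicing the two paths then places two must-link-connected endpoints on opposite ends of a cannot-link, contradicting consistency. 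Hence $|\mathcal{P}|\ge 2$ and $\mathcal{P}$ returns None. Finally, for well-definedness I would note that any two shortest paths share the same length $s\le 1$ and therefore, by Corollaries~\ref{corollary:corollary1}--\ref{corollary:corollary2}, deduce the same value, so the procedure does not depend on which shortest path is picked.

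I expect the crux to be stating the claim precisely rather than any hard computation: read literally, ``the shortest path is the only path that guarantees deduction'' needs two caveats that the case split above makes explicit --- (i) when even the shortest path carries $\ge 2$ cannot-links nothing guarantees a deduction, so the assertion degenerates to ``the shortest path correctly certifies None'', and (ii) ruling out a longer path that out-deduces the shortest one is exactly where the consistency hypothesis is used. Everything else is the three-way split on $s\in\{0,1,\ge 2\}$ built directly on the corollaries already in hand.
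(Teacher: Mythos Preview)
Your argument is correct. It reaches the same conclusion as the paper's but by a different route, and it is more explicit about the edge cases.

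The paper proves the lemma by contrapositive: it reformulates the claim as ``successful deduction along a path $\Rightarrow$ that path is shortest'', then argues that the constraint graph is \emph{conflict-free} (no pair of nodes is joined simultaneously by a weight-$0$ path and a weight-$1$ path), justifying this operationally from the query mechanism (an edge is queried only when no deducible path already exists). From conflict-freeness it concludes that any non-shortest path must have weight $\ge 2$ and hence fails. You instead run a direct three-way case split on the shortest-path weight $s\in\{0,1,\ge 2\}$, and you isolate the single case $(s=0,\ |\mathcal{P}|=1)$ as the only place the consistency hypothesis does real work---which is exactly the paper's conflict-freeness, stated from the user side rather than the algorithm side. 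What your version buys is transparency: you make explicit (i) that when $s\ge 2$ the lemma is vacuous because even the shortest path returns \textsc{None}, and (ii) that multiple shortest paths agree, so the procedure is well-defined. The paper's contrapositive is more compact but leaves both of those points implicit.
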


\begin{proof}
{\it
We proof this lemma by the Contrapositive. Firstly, the \textbf{Original Proposition} in Lemma 8 can be formally stated as: ``If there is a successful deduction of constraints between nodes $x_i, x_{i+m}$, then the path utilized for this deduction is the shortest path within the graph.'' In logical terms, this can be written as: If successful deduction ($S$), then shortest path ($Q$).

We next construct its \textbf{Contrapositive Proposition}, which is logically equivalent to the \textbf{Original Proposition}. The contrapositive is: ``If a path is not the shortest, then it will not lead to a successful deduction of the constraints between nodes $x_i, x_{i+m}$.'' In logical terms, this can be written as: If not shortest path ($\neg Q$), then not successful deduction ($\neg S$).

Now, we explain why this contrapositive holds.
The constraint graph is conflict-free. This is because, in active clustering methods, constraints are not arbitrarily added by users. Instead, they are introduced only when there is no deducible path between two nodes. Consequently, there cannot be two distinct paths between any two nodes, where one path has a weighted path length of 0 (the sum of the weights along the path) and the other has a weighted path length of 1. 
% In other words, the conflict-free nature of the constraint graph guarantees that there is only one valid path between any two nodes. 
Therefore, if a path is not the shortest, its weighted path length must be greater than 1. This means that such a path cannot lead to a successful deduction of the constraints between the two nodes. Thus, the Contrapositive Proposition: If not shortest path ($\neg Q$), then not successful deduction ($\neg S$) is true.

Finally, by the principle of contrapositive logic (i.e., If $S$ then $Q$ $\Leftrightarrow$ If $\neg Q$ then $\neg S$), the Original Proposition must also hold true:
Only the shortest path connecting these two nodes guarantees the successful deduction of the constraints between them.
}
\end{proof}
The above constitutes the comprehensive proof of Theorem \ref{theorem:theorem1}. The correctness of Equation (\ref{equ:equ3}) has been demonstrated, and it is established that only the shortest path between two nodes guarantees the successful deduction of the constraints between them according to Equation (\ref{equ:equ3}).\hfill$\blacksquare$

Furthermore, illustrative examples of the deduction process are presented in Figure \ref{fig:DSRECons}.

% new
\begin{example}
\label{example: deduction}
\rm

% As depicted in Figure \ref{fig:DSRECons}(e), there is no path between $x_b$ and $x_a$ in the minimal constraint graph, \textsc{Deduction} sends a request for annotations to the user and then obtains a user-provided constraint $\tau_{ba}=((x_b,x_a),\theta_{ba}=1)$ 
As depicted in Figure \ref{fig:DSRECons}(e), initially, there is no path between $x_e$ and $x_a$ in the minimal constraint graph, indicating that $|\mathcal{P}_{ea}|\!=\!\text{None}$. Therefore, human annotation is solicited, and the constraint between $x_e$ and $x_a$ is determined to be cannot-link, which is then added to the minimal constraint graph, as shown in Figure \ref{fig:DSRECons}(f). At this point, the shortest path $\mathcal{P}_{eb}$ consists of two cannot-link constraints: from $x_e$ to $x_a$ and from $x_a$ to $x_b$, yielding $|\mathcal{P}_{eb}|\!=\!2$. This leads to the  $\theta_{eb}\!=\!\text{None}$, prompting the need for further human annotation. Consequently, the constraint between $x_e$ to $x_a$ is determined to be cannot-link and is added to the constraint graph, as shown in Figure \ref{fig:DSRECons}(f). Additionally, as depicted in Figure \ref{fig:DSRECons}(g), the shortest path $\mathcal{P}_{da}$ consists of one must-link (from $x_c$ to $x_e$) and two cannot-link constraints (from $x_d$ to $x_c$ and from $x_e$ to $x_a$), resulting in $|\mathcal{P}_{da}|\!=\!2$. Therefore, the constraint $\tau_{da}$ necessitates human annotation, and the relationship between $x_d$ and $x_a$ is determined to be must-link and added to the constraint graph, as shown in Figure \ref{fig:DSRECons}(h).
% As depicted in Figure \ref{fig:DSRECons}(e), there is no path between $x_b$ and $x_a$, \textsc{Deduction} sends a request for annotations to the user and then obtains a user-provided constraint $\tau_{ba}=((x_b,x_a),\theta_{ba}=1)$ 
\end{example}

% old
% \begin{example}
% \label{example: deduction}
% \rm
% \textcolor{orange}{
% As depicted in Figure \ref{fig:DSRECons}(f), the shortest path $\mathcal{P}_{eb}$ comprises a single cannot-link, yielding $|\mathcal{P}_{ab}|\!=\!1$. Consequently, the derived constraint $\tau_{ab}\!=\!((x_a,x_b),\theta_{ab}\!=\!1)$. Meanwhile, the shortest path $\mathcal{P}_{ae}$ comprises two cannot-link, which results in $|\mathcal{P}_{ae}|\!=\!2$. This leads to the derived constraint $\tau_{ae}\!=\!((x_a,x_e),\theta_{ae}\!=\!\text{None})$, indicating the necessity to solicit human annotation. In contrast, in Figure \ref{fig:DSRECons}(h), the path $\mathcal{P}_{da}$ contains two cannot-links and one must-link, giving $|\mathcal{P}_{da}|\!=\!2$. This configuration results in an indeterminate $\theta_{da}\!=\!\text{None}$. The subsequent constraint $\tau_{da}\!=\!((x_d,x_a),\theta_{da}\!=\!\text{None})$ also suggests a requirement for human annotation.}
% \end{example}

Contrary to the counterparts, \textsc{Deduction} is meticulously designed to avoid altering the constraint graph based on deductive results, integrating only user-provided constraints. This design minimizes the computational cost of finding the shortest path, ensuring the efficiency of the deduction process.

\section{Analysis}
\label{sec:complexity}

\subsection{Theoretical Correctness}

Under ideal conditions (that is when data categories are objectively separable and users are error-free), our DSL algorithm is capable of achieving perfect clustering. A theoretical proof can be established based on a probabilistic assumption, i.e., the previously mentioned Assumption 1: $\text{dist}(x_i,x_j)\propto \textbf{P}^S_{ij} \propto 1/\textbf{P}^C_{ij}$. 
% \textcolor{orange}{The definitions of $\textbf{P}^S_{ij}$ and $\textbf{P}^C_{ij}$ are referred to in ``other notation" of the Preliminary Section.}

Leveraging this, \textsc{DSInit} iteratively constructs a data skeleton $\mathcal{G}_s$ with $n$ nodes as a problem of maximizing average cohesion probability. Each iteration connects current representative nodes to their nearest representatives to merge sub-clusters, optimizing the average cohesion probability during each iteration, formalized as:
\begin{equation}
\scriptsize
    \overline{\textbf{P}}_{\mathcal{C}_s}^C=\max  \sum_{x_i \in \mathcal{C}_s} \frac{\textbf{P}^C_{ij}}{|\mathcal{C}_s|}=\sum_{x_i \in \mathcal{C}_s} \max_{x_j \in \mathcal{C}_s} \frac{\textbf{P}^C_{ij}}{|\mathcal{C}_s|},
\end{equation}
where $x_j$ is also a member of $\mathcal{C}_s$ in the first equality. This guarantees the global optimality of new connections in every iteration.

As \textsc{DSInit} iterates, the cluster distances increase while the average cohesion probability decreases with each iteration, ultimately forming a hierarchical clustering tree.
% As the process unfolds, cluster distances and the average cohesion probability for each iteration increase, crafting a hierarchical clustering tree. 
The higher the hierarchy, the larger the distances, the lower the cohesion, and the higher the suspicion probability. 

% We then calculate the overall suspicion probability across $\mathcal{G}_s$ as $\overline{\textbf{P}}^S=\text{ave}(\overline{\textbf{P}}^S_{ij})$.

We then calculate the overall suspicion probability across  $\mathcal{G}_s$, denoted as $\overline{\textbf{P}}^S$. $\overline{\textbf{P}}^S$  is defined as the average of the pairwise suspicion probabilities $\overline{\textbf{P}}^S_{ij}$, i.e.,  $\overline{\textbf{P}}^S=\text{ave}(\overline{\textbf{P}}^S_{ij})$, which reflects the overall level of suspicion between nodes in the graph.
With this basis, \textsc{Recons} refines this tree-shaped data skeleton (with $n\!-\!1$ edges) under human guidance, focusing on the most suspicious edges for restructuring. A node might link to a new representative or stand-alone, leading to a reduction of a non-zero element in $\overline{\textbf{P}}^S$, meanwhile, a decrease in the overall suspicion probability. After $n\!-\!1$ iterations, $\overline{\textbf{P}}^S$ certainly drops to zero, setting an upper bound of $n$ \textsc{Recons} loops for perfect clustering.

\subsection{Upper Bound on Required Constraints}

Our analysis leads to Theorem \ref{theorem:upper}, which establishes an upper bound of $(1+\lambda k)n$ on the number of pairwise constraints needed for DSL to achieve perfect clustering, where $k$ is the actual number of data categories, $\lambda$ is the probability of an erroneous edge (i.e., the two nodes should not be assigned in the same cluster) in the initial data skeleton. This theoretical underpinning guarantees the high interactive efficiency intrinsic to our algorithm.

\begin{theorem}
\label{theorem:upper}
For a dataset with $n$ items and $k$ ground-truth categories, where the distances between data points are well-defined and the probability of erroneous edge is $\lambda \in [0,1]$, the upper bound on the number of pairwise constraints that the DSL algorithm requires for fully accurate clustering (or the maximum number of edges in the constraint graph) is $(1+\lambda k)n$.
\end{theorem}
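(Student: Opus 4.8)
The plan is to bound the number of queried constraints by decomposing the total into two parts: those arising from the \textsc{Recons} loop processing correct (non-erroneous) edges of the initial data skeleton, and those arising from processing erroneous edges. Recall from the correctness analysis that \textsc{Recons} executes at most $n$ iterations (one per edge of the tree plus the possibility of re-designating representatives), and each iteration calls \textsc{Deduction} which, by Theorem~\ref{theorem:theorem1} and Algorithm~\ref{alg:PCDeduction}, issues at most one new user-provided constraint (it either deduces the answer from the existing shortest path, or queries once and adds exactly one edge to $\mathcal{G}_c$). So a crude bound is already $n$ plus the extra queries incurred when a node must be rerouted. The work is to show the rerouting overhead is at most $\lambda k n$.

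First I would fix notation: an edge $e_{ij}$ of the initial $\mathcal{G}_s$ is \emph{erroneous} if $x_i$ and $x_j$ belong to different ground-truth categories; by hypothesis the expected number (or the count, in the deterministic reading) of such edges is $\lambda(n-1) \le \lambda n$. For a \emph{correct} edge, when \textsc{Recons} selects it, \textsc{Deduction} on $(x_i,x_j)$ costs at most one query and the outcome is must-link, so the edge is simply zeroed out with no traversal of $\mathcal{C}_s$: that contributes at most one constraint per tree edge, hence at most $n-1 < n$ in total, giving the ``$1 \cdot n$'' term. For an \emph{erroneous} edge, the cannot-link branch of \textsc{Recons} (lines 5--19) fires: $x_i$ is detached and we scan candidate representatives $x_r \in \mathcal{C}_s$ in increasing distance, calling \textsc{Deduction} on each until a must-link is found or the list is exhausted. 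Each such call costs at most one query. The key claim is that the number of representatives that can be scanned before success is bounded by the number of \emph{distinct ground-truth categories currently present among the representatives}, which is at most $k$: once $x_i$'s true category has been queried against one representative of each other category and found cannot-link, the next representative of $x_i$'s own category must yield must-link (or $x_i$ becomes a new representative, costing no further query). Hence each erroneous edge triggers at most $k$ queries, for a total of at most $\lambda n \cdot k = \lambda k n$.

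The main obstacle — and the step I would spend the most care on — is justifying that claim rigorously, because naively the scan could hit several representatives of the \emph{same} wrong category before reaching a correct one, which would blow the bound past $k$. The resolution must lean on the deduction mechanism: once a cannot-link $\tau_{i r}$ is established (queried and inserted into $\mathcal{G}_c$), then for any other representative $x_{r'}$ that has already been linked (directly or transitively in $\mathcal{G}_c$) to $x_r$ by must-links, \textsc{Deduction}$(x_i, x_{r'})$ returns cannot-link \emph{without} a query via $|\mathcal{P}_{i r'}| = 1$. So I need the invariant that all representatives sharing a ground-truth category are, over the course of the algorithm, tied together by must-link paths in $\mathcal{G}_c$ — or at least that the queried cannot-links already ``cover'' all but one representative per category by the time the scan reaches them. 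I would argue this by induction on the \textsc{Recons} iteration count, using the fact that whenever two same-category representatives $x_r, x_{r'}$ are both reachable as scan targets, an earlier must-link query (or the merging step that made one the in-degree winner) has connected them in $\mathcal{G}_c$; the counterintuitive $0/1$ weighting and the ``no redundant edges'' property of the minimal constraint graph (Definition~\ref{Constraint Graph}, and the conflict-freeness used in Lemma~8) are exactly what make the transitive deductions available for free.

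Finally I would assemble the pieces: total queries $\le$ (queries on correct edges) $+$ (queries on erroneous edges) $\le n + \lambda k n = (1+\lambda k)n$, and note that since each query adds exactly one edge to $\mathcal{G}_c$ and deductions add none, this is simultaneously the bound on $|\mathcal{E}_c|$. A subtlety worth a sentence is the interaction with the $n$-iteration bound from the correctness subsection: an erroneous edge being rerouted does not create new tree edges requiring separate accounting, because the zero-weight re-link (line~12) replaces the deleted edge, so the ``$n$ iterations'' and ``one query per correct edge'' are consistent and not double-counted. I expect the geometric/combinatorial heart of the argument — capping the per-erroneous-edge scan length at $k$ — to be where a referee would push back, so the write-up should state the $\mathcal{G}_c$-connectivity invariant as an explicit lemma before invoking it.
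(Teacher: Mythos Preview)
Your decomposition into correct versus erroneous edges and the final arithmetic $(1-\lambda)n + \lambda(k+1)n = (1+\lambda k)n$ are exactly what the paper does. Where you diverge is in the justification of the ``at most $k$ scan queries per erroneous edge'' step, and there you are manufacturing a difficulty that does not exist.

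The obstacle you flag --- that the scan over $\mathcal{C}_s$ might hit several representatives of the \emph{same} wrong category, forcing you to prove a $\mathcal{G}_c$-connectivity invariant so that deduction absorbs the duplicates --- never arises. The paper's proof dispatches it in one line: during \textsc{Recons} the representative set satisfies $|\mathcal{C}_s|\le k$ at all times, with at most one representative per ground-truth category. The reason is immediate from Algorithm~\ref{alg:DSRECons}: the only way $|\mathcal{C}_s|$ grows is line~17, which fires precisely when the detached node $x_i$ has a cannot-link to \emph{every} current representative; under error-free annotation this means $x_i$'s category is not yet represented, so the new representative introduces a new category. Line~11 can swap which node represents a category but never duplicates one. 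Hence the scan loop (line~7) iterates over at most $k$ candidates, and the worst case is simply one cannot-link to the original neighbor, $k-1$ cannot-links to representatives of the other categories, and one must-link to the (possibly furthest) representative of $x_i$'s own category --- $k+1$ queries total per erroneous edge.

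So your proposed inductive invariant on must-link connectivity inside $\mathcal{G}_c$ is not wrong, but it is unnecessary machinery; the paper's argument is strictly simpler and you should replace your ``main obstacle'' paragraph with the two-sentence bound on $|\mathcal{C}_s|$. A minor presentational point: your phrasing ``For a \emph{correct} edge \ldots\ hence at most $n-1<n$ in total'' conflates the count of correct edges with the count of all tree edges; it is cleaner to say either (i) the line-2 query costs one per tree edge, $n-1$ total, plus $\le k$ scan queries per erroneous edge, or (ii) $(1-\lambda)n$ queries on correct edges plus $(k+1)\lambda n$ on erroneous ones --- both reach $(1+\lambda k)n$, and the paper uses (ii).
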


\begin{proof}
Pairwise data comparison, the simplest form of human annotation, is assumed to be error-free. Therefore, in the process of \textsc{Recons}, the data skeleton will not exceed $k$ representatives, as a new one emerges only when unrelated to any current representative. Once $k$ representatives are identified, they cover all categories, preventing additional representatives. If the original edge between two nodes is correct, a must-link is acquired. With the probability $\lambda$, this results in a total of $(1-\lambda)n$ constraints. Conversely, if the original edge between two nodes is erroneous, in the worst case during a \textsc{Recons} round, a target node links to the furthest representative. Consequently, this acquires a cannot-link with its original and $k-1$ others, plus a must-link with the furthest. In this case, the total number of constraints will not exceed $\lambda(k+1)n$. Combining the above situations, the total number of edges in the constraint graph will not exceed $(1+\lambda k)n$.
\end{proof}

\textbf{It is important to note} that Theorem \ref{theorem:upper} establishes a theoretical upper bound of the worst-case rather than the conditions necessary for achieving perfectly accurate clustering. In real applications where distance functions are well-defined, fully accurate clustering does not necessitate reducing $\overline{\textbf{P}}^S$ to zero. Furthermore, the farthest representative linkage scenario we mentioned is also atypical. Hence, as demonstrated in the Experiments section, our empirical results substantially surpass this theoretical upper bound.

\subsection{Complexity}

\noindent\textbf{Time-Complexity.} We begin by analyzing each of the three primary components:

(I) \textsc{DSInit}: Nearest neighbor linkage on multi-dimensional data across all $n$ points has a complexity of $O(n\log n)$. In the worst-known case \cite{xie2020RSC,xie2023PRSC}, the size of each merged cluster reduces by at least half. Thus, the overall complexity of this part is $T_1= n\log n+\frac{n}{2}\log \frac{n}{2}+\cdots \in O(n\log n)$.

(II) \textsc{Recons}: This phase begins with identifying the edge with the highest weight in the data skeleton, incurring a time complexity of $O(n)$. The subsequent several times of \textsc{Deduction} are crucial for its time complexity. In extreme cases, a node seeking reconnection may iterate through up to $k+1$ possible links, where $k$ is the count of ground-truth clusters, leading to a per-loop time-complexity for \textsc{Recons} of $T_2 = O((k+1)T_3)$.

(III) \textsc{Deduction}: The main computational load in this component is attributed to identifying the shortest paths between the target node pair, which is achieved through the shortest-path search algorithm. When the constraint graph encompasses $m$ nodes, the time complexity for \textsc{Deduction} is $T_3=O(m \log m)$.

We then consider the entire process across $\mu$ rounds of human-machine interaction. The overall time complexity of DSL is $T=T_1+\mu T_2 = T_1+\mu (k+1)T_3$. With a theoretical upper bound of $\Delta = (1+\lambda k)n$ constraints, DSL faces at most $n$ interaction rounds. Thus, we have $T\in O(n\log n)+O((k+1)n \Delta\log\Delta)$, which is of an upper bound $O(n(\log n+ \Delta\log\Delta))$, considering $k$ as a constant. To the best of our knowledge, this time-complexity represents the lowest time-complexity among existing counterparts that can guarantee fully accurate clustering.

\noindent\textbf{Space-Complexity.} The memory footprint of DSL is predominantly allocated to the data skeleton and the constraint graph. A data skeleton is composed of $n$ nodes and a maximum of $n$ edges, while the constraint graph contains no more than $n$ nodes and $(1+ \lambda k)n$ edges. Hence, DSL maintains an optimal space-complexity of $O(n)$ (considering $k$ as a constant), affirming its resource-efficient architecture.

\section{Experiments} 

% All the materials used in the experiments are available on GitHub \footnote{Source codes for the \textsc{Aldp} algorithms, all the counterparts, and datasets are available at: https://github.com/YouZaiZhengTu/ALDP.git}.

\subsection{Experimental Setting }
\noindent\textbf{Datasets. \footnote{Source code, application demonstration, and datasets for DSL are available at: https://github.com/Traveler-fx/DSL}} We conduct experiments on 18 real-world datasets from the UCI Machine Learning Repository \cite{Dua2019UCI} to assess the performance of the proposed algorithm across a variety of domains. Table \ref{tab:dataset} provides the basic information of these datasets. In addition, the Olivetti Face Dataset \cite{samaria1994parameterisation} was included in the application of the algorithm to analyze face datasets.

\begin{table}[t]
\scriptsize
  \centering
  \caption{Basic information of the datasets. Note that the kddcup99 dataset used in this study is a subset of the original.}
  \label{tab:dataset}
  \resizebox{\linewidth}{!}{\begin{tabular}{@{}cccclcccc@{}}
    \toprule
    \textbf{Dataset}    & \textbf{\#Items}  & \textbf{\#Attribute} & \textbf{\#Class} && \textbf{Dataset}    & \textbf{\#Items}  & \textbf{\#Attribute} & \textbf{\#Class} \\
    \cline{1-4} \cline{6-9}
    tae     & 151   & 5       & 3     && led      & 500     & 7       & 10    \\
    divorce & 170   & 54      & 2     && balance  & 625     & 4       & 3     \\
    wine    & 178   & 13      & 3     && breast   & 699     & 9       & 2     \\
    thyroid & 215   & 5       & 3     && vehicle  & 846     & 18      & 4     \\
    ecoli   & 336   & 7       & 8     && banknote & 1,372    & 4       & 2     \\
    musk    & 476   & 166     & 2     && segment  & 2,310    & 19      & 7     \\
    \cline{1-4} \cline{6-9}
    EEG     & 14,980 & 14      & 2     && digits   & 70,000   & 784     & 10    \\
    letter  & 20,000 & 16      & 26    && skin     & 245,057  & 3       & 2     \\
    avlia   & 20,867 & 10      & 12    && kddcup99 & 489,843 & 38      & 23    \\ 
    \bottomrule
  \end{tabular}
  }
\end{table}

\noindent\textbf{Implementations.} For our experimental setup, a PC with a 3.9GHz CPU and 32GB of RAM running Windows 11 and Python 3.11 was used. For our benchmarks, we included three algorithms known for their high interaction efficiency: SPACE \cite{zhou2023active}, ADP \cite{shi2020fast}, and ADPE \cite{shi2020fast}. Additionally, we considered two scalable methods, COBRA \cite{van2018cobra} and COBRAS \cite{van2018cobras}, along with two classic algorithms, FFQS \cite{basu2004active} and MinMax \cite{mallapragada2008active}.

\noindent\textbf{Evaluation Indicator.} We first employed the Adjusted Rand Index \cite{ARI} (ARI) to benchmark the clustering accuracy with respect to the amount of user-provided constraints. 
Inspired by the ROC curve and its Area Under Curve (AUC) \cite{huang2005using}, we also introduced the Interactive Clustering Efficiency Curve (ICE Curve) and the Area Under the Interaction Curve (AUIC) as novel metrics to evaluate the overall efficiency of interactive clustering algorithms. 

The ICE Curve visually plots the progress in clustering accuracy against the cumulative amount of user-provided constraints. The AUIC metric, akin to the AUC in the ROC context, offers a numerical assessment of an algorithm's clustering process efficiency by accounting for both accuracy and the volume of constraints. We note the AUIC score at the dataset's size as AUIC@$n$, calculated by the formula: 
\begin{equation}
\scriptsize
    \text{AUIC}@n=\frac{1}{n}\sum_{i=0}^{n-1}\frac{s_i+s_{i+1}}{2},
\end{equation}
where $s_i$ denotes the clustering accuracy at the $i$-th constraint. 

A higher AUIC score signifies that the algorithm achieves better accuracy with fewer required constraints, indicating a more efficient approach to interactive clustering. An example of the ICE Curve and how it relates to AUIC is depicted in Figure \ref{fig:accI-intro}.

\begin{figure}[h]
    \centering
    \includegraphics[width=0.6\linewidth]{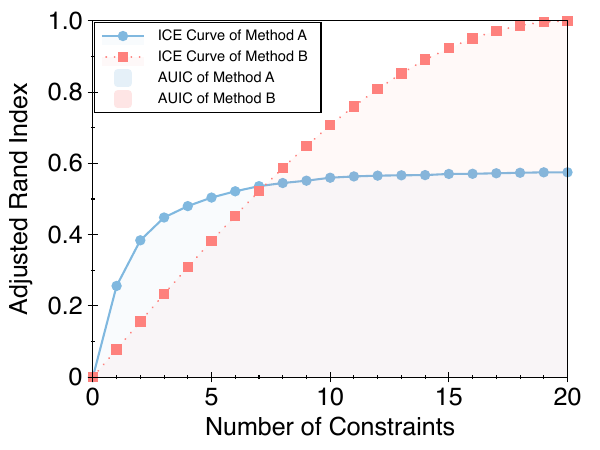}
    \caption{Example of the ICE Curve and AUIC. In this case, we set $n$ to 20. Here, Method A achieves an AUIC@20 of 0.510, whereas Method B attains an AUIC@20 of 0.636, showcasing Method B's superior efficiency in clustering.}
    \label{fig:accI-intro}
\end{figure}

\begin{table}[h]
\scriptsize
    \centering
    \caption{Comparative Analysis of Clustering Accuracy (ARI) v.s. User-provided Constraint  Counts on 12 small datasets. Highlighted values represent the highest ARI for a given constraint count.}
    % \vspace{0.1in}
    \label{tab:accuracy}
    \resizebox{1\linewidth}{!}{%
    \setlength{\tabcolsep}{4mm}{
    \begin{tabular}{@{}lccccccccc@{}}
    \toprule
    \textbf{Dataset} & \textbf{\#Constraints} & \textbf{DSL} & \textbf{COBRA} & \textbf{COBRAS} & \textbf{ADP} & \textbf{ADPE} & \textbf{SPACE} & \textbf{FFQS} & \textbf{MinMax}\\
    \midrule
    \multirow{4}{*}{tae} & 130 & \cellcolor{yellow!25}\textbf{0.486} & 0.242 & 0.259 & 0.348 & 0.314 & 0.063 & 0.333 & 0.429\\
                         & 170 & \cellcolor{yellow!25}\textbf{0.738} & 0.521 & 0.365 & 0.526 & 0.457 & 0.063 & 0.686 & 0.435\\
                         & 210 & \cellcolor{yellow!25}\textbf{0.902} & 0.810 & 0.518 & 0.844 & 0.702 & 0.063 & 0.827 & 0.887\\  
                         & 250 & \cellcolor{green!15}\textbf{1.000} & 0.810 & 0.704 & \cellcolor{green!15}\textbf{1.000} & 0.940 & 0.022 & 0.892 & \cellcolor{green!15}\textbf{1.000}\\ \hline
    \multirow{4}{*}{divorce} & 20 & \cellcolor{yellow!25}\textbf{0.953} & 0.285 & 0.930 & \cellcolor{yellow!25}\textbf{0.953} & \cellcolor{yellow!25}\textbf{0.953} & 0.842 & 0.930 & 0.908\\
                             & 25 & \cellcolor{yellow!25}\textbf{0.976} & 0.310 & 0.930 & 0.953 & \cellcolor{yellow!25}\textbf{0.976} & 0.842 & 0.930 & 0.908\\
                             & 30 & \cellcolor{yellow!25}\textbf{0.976} & 0.334 & 0.953 & \cellcolor{yellow!25}\textbf{0.976} & \cellcolor{yellow!25}\textbf{0.976} & 0.842 & 0.930 & 0.953\\
                             & 35 & \cellcolor{green!15}\textbf{1.000} & 0.365 & 0.953 & \cellcolor{green!15}\textbf{1.000} & 0.976 & 0.842 & 0.930 & 0.976\\ \hline
    % Add other datasets following the same pattern
    \multirow{4}{*}{wine}    & 60 & \cellcolor{yellow!25}\textbf{0.967} & 0.509 & 0.899 & 0.964 & 0.948 & 0.824 & 0.949 & 0.933  \\         
                                 & 70 & \cellcolor{yellow!25}\textbf{0.967} & 0.669 & 0.899 & 0.964 & 0.964 & 0.850 & 0.913 & 0.912  \\  
                                 & 80 & \cellcolor{yellow!25}\textbf{0.983} & 0.780 & 0.899 & 0.964 & 0.964 & 0.825 & 0.965 & \cellcolor{yellow!25}\textbf{0.983}   \\           
                                 & 90 & \cellcolor{green!15}\textbf{1.000} & 0.831 & 0.899 & 0.964 & 0.982 & 0.825 & 0.931 & 0.965   \\    \hline                   
    \multirow{4}{*}{thyroid} & 80 & \cellcolor{yellow!25}\textbf{0.921} & 0.884 & 0.833 & 0.000 & 0.000 & 0.628 & 0.748 & 0.889 \\
                             & 90 & \cellcolor{yellow!25}\textbf{0.921} & 0.899 & 0.833 & 0.000 & 0.000 & 0.628 & 0.874 & 0.840 \\
                             & 100 & \cellcolor{yellow!25}\textbf{0.952} & 0.925 & 0.892 & 0.000 & 0.000 & 0.628 & 0.905 & 0.873 \\
                             & 110 & \cellcolor{green!15}\textbf{1.000} & 0.954 & 0.892 & 0.000 & 0.000 & 0.628 & 0.906 & 0.880 \\ \hline
    \multirow{4}{*}{ecoli}   & 150 & \cellcolor{yellow!25}\textbf{0.845} & 0.804 & 0.712 & 0.498 & 0.690 & 0.423 & 0.461 & 0.464 \\
                             & 250 & \cellcolor{yellow!25}\textbf{0.897} & 0.804 & 0.737 & 0.895 & 0.856 & 0.422 & 0.620 & 0.864 \\
                             & 350 & 0.934 & 0.804 & 0.739 & 0.959 & \cellcolor{yellow!25}\textbf{0.973} & 0.436 & 0.859 & 0.934 \\ 
                             & 450 & \cellcolor{green!15}\textbf{1.000} & 0.804 & 0.816 & 0.996 & \cellcolor{green!15}\textbf{1.000} & 0.395 & 0.905 & 0.977 \\ \hline
    \multirow{4}{*}{musk}    & 350 & \cellcolor{yellow!25}\textbf{0.584} & 0.406 & 0.497 & 0.135 & 0.122 & -0.006 & 0.335 & 0.348 \\
                             & 450 & \cellcolor{yellow!25}\textbf{0.854} & 0.406 & 0.590 & 0.468 & 0.462 & -0.006 & 0.623 & 0.623 \\
                             & 550 & \cellcolor{yellow!25}\textbf{0.975} & 0.406 & 0.713 & 0.975 & 0.967 & -0.006 & 0.918 & 0.967 \\
                             & 650 & \cellcolor{green!15}\textbf{1.000} & 0.406 & 0.786 & \cellcolor{green!15}\textbf{1.000} & \cellcolor{green!15}\textbf{1.000} & 0.005 & \cellcolor{green!15}\textbf{1.000} & \cellcolor{green!15}\textbf{1.000} \\  \hline

    \multirow{4}{*}{led}     & 600 & \cellcolor{yellow!25}\textbf{0.745} & 0.526 & 0.526 & 0.647 & 0.420 & 0.513 & 0.727 & 0.555 \\
                             & 700 & \cellcolor{yellow!25}\textbf{0.832} & 0.526 & 0.526 & 0.707 & 0.575 & 0.519 & 0.826 & 0.735 \\
                             & 800 & \cellcolor{yellow!25}\textbf{0.910} & 0.526 & 0.526 & 0.759 & 0.750 & 0.561 & 0.827 & 0.908 \\
                             & 900 & \cellcolor{green!15}\textbf{1.000} & 0.526 & 0.526 & 0.882 & 0.948 & 0.597 & 0.964 & 0.973 \\ \hline
    \multirow{4}{*}{balance} & 250 & \cellcolor{yellow!25}\textbf{0.548} & 0.591 & 0.622 & 0.184 & 0.165 & 0.151 & 0.150 & 0.467 \\
                             & 450 & \cellcolor{yellow!25}\textbf{0.756} & 0.591 & 0.598 & 0.534 & 0.745 & 0.151 & 0.717 & 0.723 \\
                             & 650 & \cellcolor{yellow!25}\textbf{0.871} & 0.591 & 0.669 & 0.747 & 0.847 & 0.151 & 0.867 & 0.864 \\
                             & 850 & \cellcolor{green!15}\textbf{1.000} & 0.591 & 0.737 & \cellcolor{green!15}\textbf{1.000} & \cellcolor{green!15}\textbf{1.000} & 0.151 & \cellcolor{green!15}\textbf{1.000} & 0.989 \\ \hline
    \multirow{4}{*}{breast}  & 80  & \cellcolor{yellow!25}\textbf{0.899} & 0.766 & 0.861 & 0.871 & 0.882 & 0.837 & 0.888 & 0.870 \\
                             & 160 & \cellcolor{yellow!25}\textbf{0.915} & 0.872 & 0.695 & 0.893 & 0.954 & 0.837 & 0.850 & \cellcolor{yellow!25}\textbf{0.915} \\ 
                             & 240 & \cellcolor{yellow!25}\textbf{0.954} & 0.872 & 0.735 & 0.910 & 0.977 & 0.837 & 0.899 & 0.940 \\
                             & 320 & \cellcolor{green!15}\textbf{1.000} & 0.872 & 0.802 & 0.943 & 0.977 & 0.837 & 0.893 & 0.988 \\ \hline
    \multirow{4}{*}{vehicle} & 300 & \cellcolor{yellow!25}\textbf{0.328} & 0.373 & 0.249 & 0.188 & 0.136 & 0.077 & 0.087 & 0.142 \\
                             & 700 & \cellcolor{yellow!25}\textbf{0.608} & 0.373 & 0.472 & 0.533 & 0.459 & 0.153 & 0.381 & 0.434 \\
                             & 1,100& 0.833 & 0.373 & 0.603 & 0.889 & \cellcolor{yellow!25}\textbf{0.926} & 0.078 & 0.764 & 0.817 \\
                             & 1,500& \cellcolor{green!15}\textbf{1.000} & 0.373 & 0.757 & \cellcolor{green!15}\textbf{1.000} & \cellcolor{green!15}\textbf{1.000} & 0.078 & \cellcolor{green!15}\textbf{1.000} & \cellcolor{green!15}\textbf{1.000} \\  \hline
    \multirow{4}{*}{banknote}& 60  & \cellcolor{yellow!25}\textbf{0.968} & 0.179 & 0.873 & 0.000 & 0.770 & 0.000 & 0.035 & 0.034 \\
                             & 90  & \cellcolor{yellow!25}\textbf{0.997} & 0.836 & 0.957 & 0.000 & 0.957 & 0.000 & 0.003 & 0.043 \\
                             & 120 & \cellcolor{yellow!25}\textbf{0.997} & 0.994 & 0.957 & 0.000 & 0.931 & 0.000 & 0.049 & 0.054 \\
                             & 150 & \cellcolor{green!15}\textbf{1.000} & 0.994 & 0.985 & 0.000 & 0.994 & 0.000 & 0.057 & 0.078 \\ \hline
    
    \multirow{4}{*}{segment}  & 800  & \cellcolor{yellow!25}\textbf{0.917} & 0.770 & 0.870 & 0.318 & 0.540 & 0.470 & 0.552 & 0.532 \\
                              & 1,300 & \cellcolor{yellow!25}\textbf{0.972} & 0.770 & 0.938 & 0.744 & 0.773 & 0.470 & 0.724 & 0.693 \\
                              & 1,800 & \cellcolor{yellow!25}\textbf{0.995} & 0.770 & 0.942 & 0.803 & 0.851 & 0.469 & 0.890 & 0.750 \\
                              & 2,300 & \cellcolor{green!15}\textbf{1.000} & 0.770 & 0.956 & \cellcolor{green!15}\textbf{1.000} & 0.995 & 0.500 & \cellcolor{green!15}\textbf{1.000} & \cellcolor{green!15}\textbf{1.000} \\ 
    \bottomrule  
    \end{tabular}}
    }
\end{table}

\begin{table}[ht]
\centering
\caption{Comparative Analysis of Clustering Accuracy (ARI) v.s. Constraint Counts on 6 Large Datasets. Highlighted values represent the highest ARI for a given constraint count. `--' denotes cases where runtime exceeded 48 hours or memory capacity on our testbed. ADPE, SPACE, FFQS, and MinMax are not included in the table as they consistently exceeded 48 hours of runtime or exhausted memory across all 6 large datasets.}
\label{tab:accuracy-2}
\resizebox{\textwidth}{!}{
\begin{tabular}{cccccclcccccc}
\toprule
\textbf{Dataset} & \textbf{\#Constraints} & \textbf{DSL} & \textbf{COBRA} & \textbf{COBRAS} & \textbf{ADP} & & \textbf{Dataset} & \textbf{\#Constraints} & \textbf{DSL} & \textbf{COBRA} & \textbf{COBRAS} & \textbf{ADP} \\ 
\cline{1-6} \cline{8-13}
\multirow{4}{*}{EEG} & 4,000 & \cellcolor{yellow!25}0.302 & 0.059 & 0.231 & 0.015 & & \multirow{4}{*}{digits} & 27,000 & \cellcolor{yellow!25}0.887 & 0.750 & -- & -- \\
& 9,000 & \cellcolor{yellow!25}\textbf{0.544} & 0.059 & 0.394 & 0.248 &&& 42,000 & \cellcolor{yellow!25}\textbf{0.956} & 0.750 & -- & -- \\ 
& 14,000& \cellcolor{yellow!25}\textbf{0.754} & 0.059 & 0.530 & 0.751 &&& 57,000 & \cellcolor{yellow!25}\textbf{0.993} & 0.750 & -- &  -- \\
& 19,000& \cellcolor{green!15}\textbf{1.000} & 0.059 & 0.645 & \cellcolor{green!15}\textbf{1.000} &&& 72,000 & \cellcolor{green!15}\textbf{1.000}  & 0.750 & -- & --  \\ 
\hline
\multirow{4}{*}{letter} & 8,000 & \cellcolor{yellow!25}0.720 & 0.263 & 0.716 & -- && \multirow{4}{*}{skin} & 20,000 & \cellcolor{yellow!25}0.997 & 0.967 & -- & -- \\
& 16,000& \cellcolor{yellow!25}\textbf{0.912} & 0.263 & -- & -- &&& 50,000 & \cellcolor{yellow!25}\textbf{0.999} & 0.967 & -- & -- \\
& 24,000& \cellcolor{yellow!25}\textbf{0.977} & 0.263 & -- & -- &&& 80,000 & \cellcolor{yellow!25}\textbf{0.999} & 0.967 & -- & -- \\
& 32,000& \cellcolor{green!15}\textbf{1.000} & 0.263 & -- & -- &&& 110,000& \cellcolor{green!15}\textbf{1.000}  & 0.967 & -- & -- \\ 
\hline
\multirow{4}{*}{avila} & 19,000& \cellcolor{yellow!25}0.629 & 0.156 & -- & -- && \multirow{4}{*}{kddcup99} & 130  & \cellcolor{yellow!25}0.990 & 0.959 & -- & -- \\
& 27,000& \cellcolor{yellow!25}\textbf{0.776} & 0.156 & -- & -- &&& 1,300  & \cellcolor{yellow!25}\textbf{0.997} & 0.989 & -- & -- \\
& 35,000& \cellcolor{yellow!25}\textbf{0.897} & 0.156 & -- & -- &&& 13,000 & \cellcolor{yellow!25}\textbf{0.999} & 0.989 & -- & -- \\
& 43,000& \cellcolor{green!15}\textbf{1.000} & 0.156 & -- & -- &&& 130,000& \cellcolor{green!15}\textbf{1.000}  & 0.989 & -- & -- \\
\bottomrule
\end{tabular}
}
\end{table}

\subsection{Experimental Result}

\noindent\textbf{Exp. 1: Interactive Efficiency.} We first examined the impact of user-provided-constraint counts on the ARI score of various algorithms to validate the interactive efficiency of our algorithm. As shown in Tables \ref{tab:accuracy}-\ref{tab:accuracy-2}, we measured the Adjusted Rand Index (ARI) achieved by each method when provided with a fixed number of constraints. Our analysis spanned 12 small-scale datasets and 6 large-scale ones. The DSL algorithm consistently outperformed other benchmark methods: it achieved higher ARI values than competitors with an equal amount of constraints in all but two cases. Furthermore, it is observed from the last row of each dataset that our algorithm reached an ARI of 1 with fewer constraints than other algorithms that often did not achieve perfect grouping. This demonstrates that DSL can achieve 100\% accurate clustering results with the minimum constraints in most scenarios. It is worth noting that the complexity of the configuration of the SPACE, ADP, and ADPE parameters greatly affects their effectiveness. When the recommended parameters are extremely mismatched on some datasets, clustering will fail (e.g., the case of ARI<0). Another interesting observation is that the classic methods FFQS and MinMax sometimes outperform state-of-the-art methods. Nevertheless, these classic methods are significantly more time-consuming (see Figure \ref{fig:cpu_time} in Exp. 3).

\noindent\textbf{Exp. 2: Overall Evaluation.} 
Subsequently, we assessed the overall performance of interactive clustering algorithms using ICE Curves and AUIC scores. Figures \ref{fig:overall-data-1}-\ref{fig:overall-data-2} present the trends in clustering quality (ARI) as a function of the increasing volume of constraints. DSL algorithm demonstrates an overall advantage, consistently appearing above other curves in most cases. Table \ref{tab:ACCI-samll} details the comparative results of the AUIC@$n$ scores. As indicated, DSL algorithm secures the highest AUIC@$n$ score across most datasets, exhibiting superior performance with the highest average score (MEAN) as well. 

\begin{figure}[!h]
    \centering
    \includegraphics[width=0.9\linewidth]{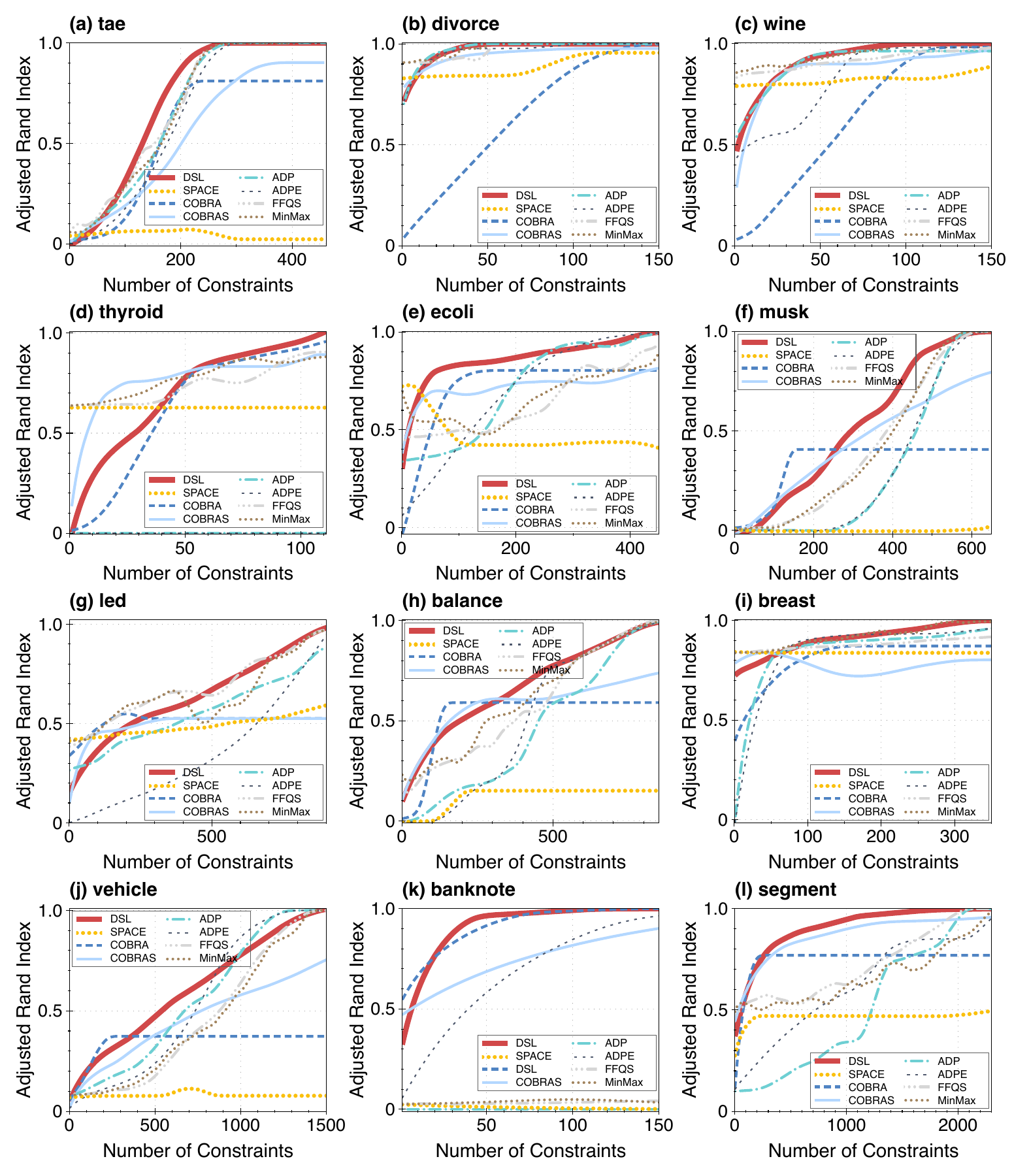}
    \caption{Comparison of ICE Curves on Small Datasets.}
    \label{fig:overall-data-1}
\end{figure}

\begin{figure}[!h]
    \centering
    \includegraphics[width=0.9\linewidth]{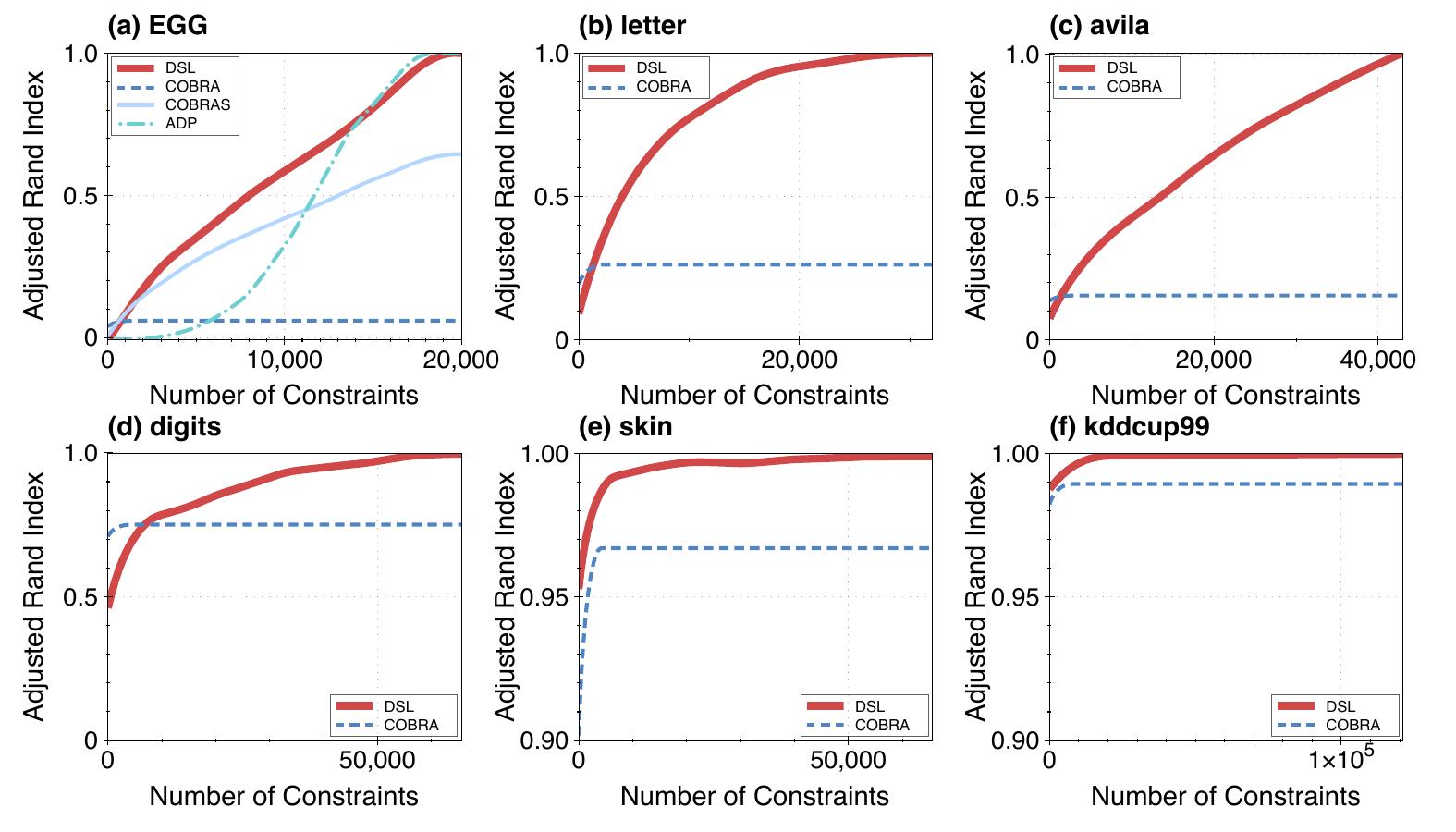}
    \caption{Comparison of ICE Curves on Large-Scale Datasets.}
    \label{fig:overall-data-2}
\end{figure}

\begin{table}[t]
  \centering
  \caption{AUIC@$n$ Score Comparison. The best scores are highlighted. `--' denotes runtimes over 48 hours or exceeding memory capacity on our testbed.}
  \label{tab:ACCI-samll}
  \resizebox{\linewidth}{!}{
  \setlength{\tabcolsep}{5mm}{
    \begin{tabular}{@{}cccccccccc@{}}
      \toprule
      \textbf{Dataset}    & \textbf{DSL}  & \textbf{COBRA} & \textbf{COBRAS} & \textbf{ADP} & \textbf{ADPE}    & \textbf{SPACE}  & \textbf{FFQS} & \textbf{MinMax}\\
      \midrule
      tae & \cellcolor{yellow!25}  \textbf{0.229} & 0.101 & 0.152 & 0.188 & 0.129 & 0.055 & 0.201 & 0.216 \\
      divorce & \cellcolor{yellow!25} \textbf{0.984} & 0.705 & 0.958 & 0.977 & 0.978 & 0.898 & 0.958 & 0.983 \\     
      wine & \cellcolor{yellow!25} \textbf{0.943} & 0.689 & 0.894 & 0.923 & 0.854 & 0.836 & 0.933 & 0.942 \\      
      thyroid & 0.854 & 0.790 & \cellcolor{yellow!25} \textbf{0.867} & 0.189 & 0.200 & 0.698 & 0.838 & 0.864 \\      
      ecoli & \cellcolor{yellow!25} \textbf{0.838} & 0.705 & 0.706 & 0.634 & 0.612 & 0.477 & 0.569 & 0.600 \\      
      musk & \cellcolor{yellow!25} \textbf{0.374} &  0.317 & 0.338 & 0.097 & 0.094 & 0.000 & 0.238 & 0.230 \\      
      led &   0.488 & 0.504 & 0.474 & 0.427 & 0.146 & 0.452 & \cellcolor{yellow!25} \textbf{0.568} & 0.547   \\
      balance & \cellcolor{yellow!25} \textbf{0.585} & 0.504 & 0.543 & 0.279 & 0.550 & 0.111 & 0.454 & 0.498 \\
      breast & \cellcolor{yellow!25} \textbf{0.956} & 0.845 & 0.826 & 0.923 & 0.918 & 0.837 & 0.918 & 0.954 \\
      vehicle & \cellcolor{yellow!25} \textbf{0.416} & 0.333 & 0.326 & 0.301 & 0.246 & 0.084 & 0.198 & 0.207 \\
      banknote & \cellcolor{yellow!25} \textbf{0.993} &  0.942 & 0.973 & 0.237 & 0.970 & 0.001 & 0.518 & 0.502 \\
      segment & \cellcolor{yellow!25} \textbf{0.913} & 0.754 & 0.875 & 0.514 & 0.622 & 0.465 & 0.726 & 0.665 \\  
      % \midrulef
      EEG &     \cellcolor{yellow!25} \textbf{0.454} & 0.059  & 0.332 & 0.245 & -- & -- & -- & -- \\
      letter & \cellcolor{yellow!25} \textbf{0.801} & 0.261   & -- & -- & -- & -- & -- & -- \\
      avila & \cellcolor{yellow!25} \textbf{0.510}  & 0.155   & -- & -- & -- & -- & -- & -- \\ 
      digits & \cellcolor{yellow!25} \textbf{0.916} & 0.750   & -- & -- & -- & -- & -- & -- \\
      skin & \cellcolor{yellow!25} \textbf{0.999}   & 0.967   & -- & -- & -- & -- & -- & -- \\
      kddcup99 & \cellcolor{yellow!25} \textbf{0.999} & 0.989   & -- & -- & -- & -- & -- & -- \\
      \midrule
      \textbf{MEAN} & \cellcolor{yellow!25} \textbf{0.736} & 0.576 & 0.635 & 0.456 & 0.527 & 0.410 & 0.593 & 0.601 \\
      \bottomrule
    \end{tabular}
    }
 }
\end{table}

% Furthermore, we perform pairwise $t$-test to verify that our algorithm outperforms its counterparts with statistical significance. We set the null hypothesis as $H_{0}: R-R^* = 0$ and set the alternative hypothesis as $H_{a}: R-R^* \neq 0$, with $R$ and $R^*$ representing the AUIC@$n$ scores for our $\textsc{DSL}$ algorithm and the other respective counterparts, respectively. As shown in Table \ref{tab:t-test}, the highest observed $p$-value is 0.031, suggesting that our $\textsc{DSL}$ algorithm consistently outperforms the other baseline methods at a statistically significant level of 5\%.

Furthermore, we conducted non-parametric statistical tests on the AUIC@$n$ scores to verify whether our algorithm outperforms its counterparts with statistical significance. First, we applied the Friedman test to determine whether there were statistically significant differences among the algorithms. For fairness, we used the first 12 datasets (tae to segment) to avoid bias from missing data in later datasets. The null hypothesis ($H_0$) tested that all methods have equal average performance, and the alternative hypothesis ($H_a$) indicated that at least two methods perform differently. Algorithms were ranked based on their AUIC@$n$ scores, and the Friedman test yielded a test statistic $Q = 41.506$ with a $p$-value of $ 6.472 \times 10^{-7}$, confirming significant differences across the methods ($p < 0.05$). Subsequently, the Nemenyi post-hoc test identified pairwise differences. The CD diagram (Figure \ref{fig:post_hoc}) shows that DSL’s rank (1.42) is significantly better than SPACE, ADP, ADPE, COBRA, and FFQS, confirming its superiority.

\begin{figure}[ht]
    \begin{center}
    \includegraphics[width=0.9\linewidth]{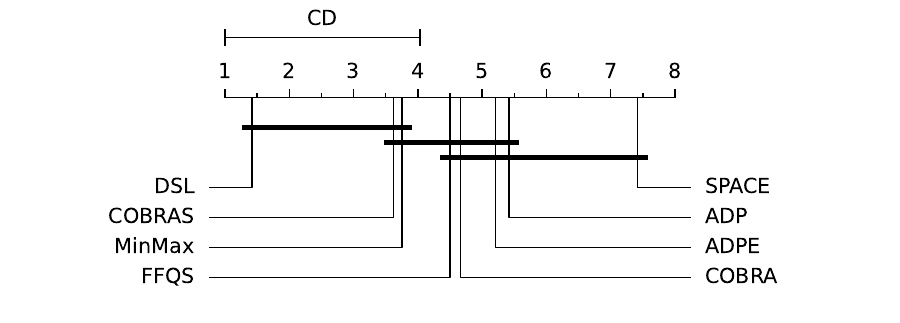}
    \caption{{Critical difference (CD) diagram for the  post-hoc Nemenyi test ($\alpha = 0.05$) applied to AUIC@$n$ scores. A thick horizontal line groups a set of methods that are not significantly different.}}
    \label{fig:post_hoc}
    \end{center}
\end{figure}

% \begin{table}[h]
%   \centering
%   \caption{The result of pairwise $t$-test.}
%   \label{tab:t-test}
%   \resizebox{\linewidth}{!}{
%   \setlength{\tabcolsep}{6mm}{
%   \begin{tabular}{@{}cccccccc@{}}
%     \toprule 
%      & \textbf{COBRA} & \textbf{COBRAS} & \textbf{ADP} & \textbf{ADPE} & \textbf{SPACE}  & \textbf{FFQS} & \textbf{MinMax} \\
%     \cline{1-8}
%     \textbf{\textit{t}-value} & 4.743 & 4.095 & 3.272 & 3.476 & 4.010 & 2.786 & 2.470 \\
%     \textbf{\textit{p}-value} & 0.001 & 0.002 & 0.007 & 0.005 & 0.002 & 0.018 & \cellcolor{yellow!25} \textbf{0.031} \\
%     \bottomrule
%   \end{tabular}
%   }
%   }
% \end{table}

\noindent\textbf{Exp. 3: Computational Efficiency}. We verified the efficiency of DSL by measuring response times on synthetic datasets of incremental sizes. Figure \ref{fig:cpu_time}(a) presents the Response Time vs. Data Size for our DSL algorithm and other counterparts. In this figure, the relationship between response time and data size for each algorithm, except for the SPACE algorithm, is fitted using a power-law function, represented as straight lines in a double logarithmic plot. The slopes of these lines indicated in the legend represent the growth rate of the algorithm's response time. Figure \ref{fig:cpu_time}(a) shows that DSL's growth rate in response time was lower than most of the compared algorithms, except for COBRA. Despite COBRA's slightly better computational efficiency, our DSL algorithm surpasses it in clustering quality, achieving 100\% accuracy, a level not reached by COBRA. This advantage underscores the practical value of our algorithm as an interactive data pre-grouping tool in real-world applications.

\begin{figure}[t]
    \centering
    \includegraphics[width=0.9\linewidth]{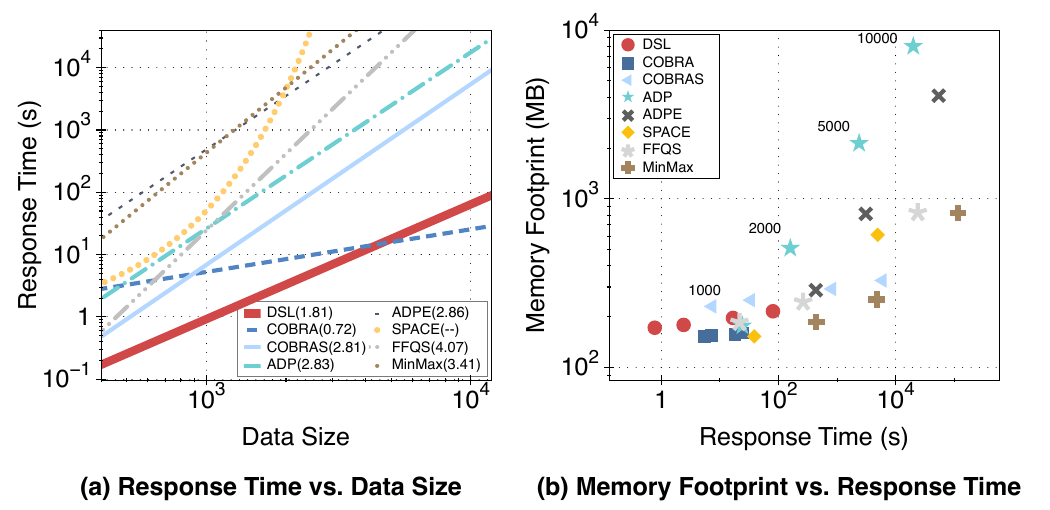}
    \caption{Response Time v.s. Data Size. Note that SPACE is of a non-linear fitting line due to its non-polynomial time-complexity.}
    \label{fig:cpu_time}
\end{figure}

\noindent\textbf{Exp. 4: Scalability.} The scalability of our algorithm is also evidenced in Table \ref{tab:accuracy-2}, focusing on the final six rows pertaining to large-scale datasets. Only our DSL algorithm and COBRA are capable of fully processing these datasets, with DSL invariably achieving higher clustering accuracy. Notably, COBRA's preprocessing mechanism caps its constraint intake, halting updates to clustering outcomes past a certain point and thereby often failing to reach full accuracy, as Table \ref{tab:accuracy-2} indicated.

In addition, we also measured the Memory Footprint against Response Time for varied-sized synthetic datasets, as shown in Figure \ref{fig:cpu_time}(b). Dataset sizes are noted next to the data points. DSL consistently shows superior response times and memory footprints, except for COBRA's slightly lower memory footprints, reinforcing DSL's scalability.

\vspace{-1em}
\section{Application: Face Analysis on Olivetti Dataset}
\label{sec:application}
\vspace{-0.5em}

We further assess DSL's real-world application for face analysis on the Olivetti Face Dataset, demonstrating its compatibility with different distance measures and its operational efficiency in interactive settings, which confirms its efficacy for practical deployment.

\noindent\textbf{Clustering Results Across Distance Metrics.} Here, we analyze the DSL algorithm's robustness using various similarity metrics for clustering within Euclidean space. We introduce four distinct measures for this purpose: the conventional Euclidean and cosine distances applied to grayscale image vectors, the deep feature-based similarity derived from the ResNet50 model, and the wavelet-transform CW-SSIM index. A random distance metric is also introduced as a baseline for perspective. 

As illustrated in Figure \ref{fig:app1-sm}(a), despite some variances among the distance metrics, they all achieve 100\% accuracy after approximately 1500 constraints, demonstrating a general consistency in their performance trends. Notably, the Random distance metric's eventual achievement of 100\% accuracy after just over 7000 constraints reinforces the theoretical predictions that a maximum of $(k+1)n$ constraints is sufficient for fully accurate clustering. This conclusively indicates the robustness of our DSL algorithm to different similarity scaling in real-world applications.

\begin{figure}[t]
    \centering
    \includegraphics[width=0.9\linewidth]{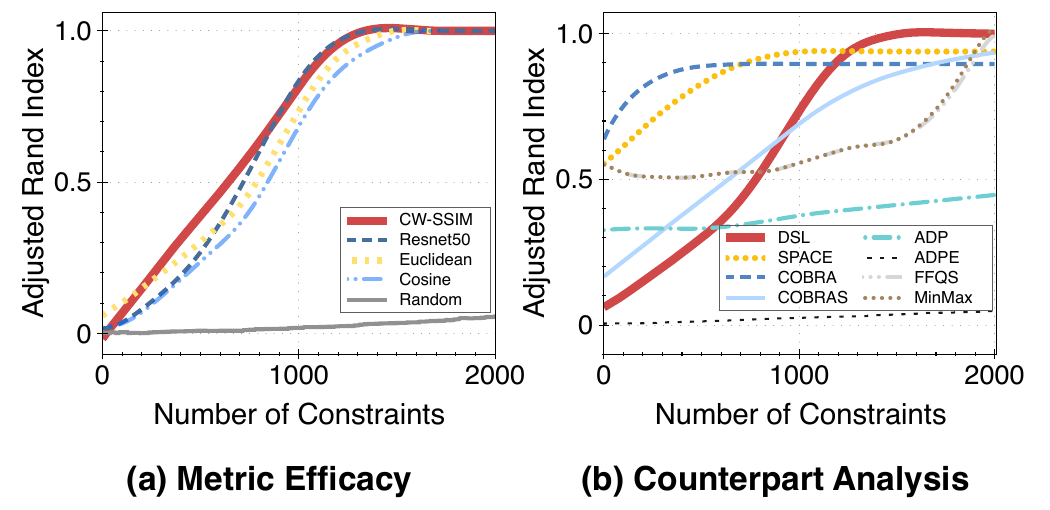}
    \caption{Comparative Analysis on Olivetti Dataset}
    \label{fig:app1-sm}
\end{figure}

\noindent\textbf{Comparative with Benchmark Algorithms.} As shown in Figure \ref{fig:app1-sm}(b), DSL stands out for its efficiency, reaching complete accuracy with the fewest constraints. Although COBRA and SPACE exhibit an initial surge in clustering performance, they do not achieve the 100\% mark. This contrast highlights DSL's advantage in effectively using constraints to achieve complete and consistent clustering accuracy, avoiding the early performance stalls that some algorithms experience. 

These results underscore the practical superiority of our DSL algorithm, especially in contexts where the economy of constraints is crucial. Its efficient and steady performance demonstrates the algorithm's adaptability and reliability for real-world applications. 

\begin{figure}[t]
\centering
    \includegraphics[width=1.0\linewidth]{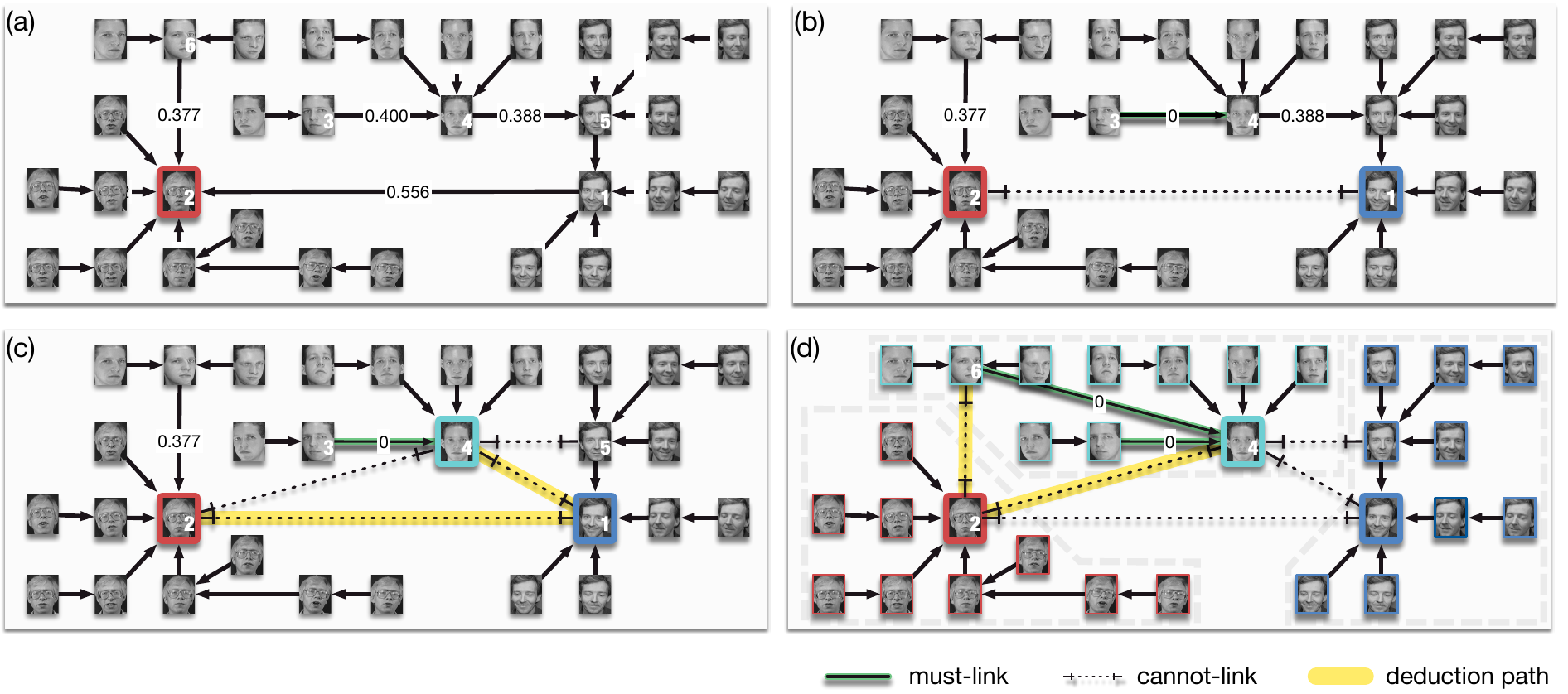}
    \caption{Practical Demonstration on the Olivetti Face Dataset}
    \label{fig:face-sm}
\end{figure}

\noindent\textbf{Sample Set Demonstration.} 
We employ the initial 30 faces from the Olivetti Face Dataset as input. In the \textsc{DSInit} phase, the data skeleton is established, as depicted in Figure \ref{fig:face-sm}(a), where Image 2 serves as the representative. During the \textsc{Recons} phase, DSL refines the edges of four images (Images 1, 3, 4, and 5) and achieves a perfect clustering result, as depicted in Figure \ref{fig:face-sm}(b-d):
(b) The original link of Image 1 is deleted, and Image 1 is formed as a new representative. The original link of Image 3 is maintained while adjusting its distance weight to 0.
(c) The original link of Image 4 is deleted, and Image 4 is formed a new representative. In this process, when judging the relationship between Images 4 and 2, deduction from Equation \ref{equ:equ3} is employed. However, due to two cannot-link constraints along the shortest path (highlighted in yellow), user-provided constraints is needed.
(d) The original link of Image 5 is deleted, and Image 5 connects to the nearest representative, Image 4. In this process, judging the relationship between Images 5 and 4 also requires deduction from Equation \ref{equ:equ3}, yet for the same reason, it necessitates user-provided constraints.
In summary, DSL only relies on 7 constraints to process the 30 images.

\vspace{-1.5em}
\section{Conclusion}
\vspace{-1em}

In this work, we propose the Data Skeleton Learning (DSL) framework, tailored for scalable active clustering. DSL captures the clustering result using a sparsely connected graph (data skeleton), which is refined iteratively by a sparse constraint graph with a unique deduction mechanism, ensuring efficient updates. This approach significantly reduces the need for extensive user-provided constraints while preserving the essential structure of data clusters. Theoretical and empirical evaluations have affirmed its efficiency and scalability. When applied to the real-world face dataset, DSL also exhibits robust performance against various distance metrics, further validating its practicability. This development optimizes the active clustering workflow, integrating human insight with algorithmic precision, essential for processing large-scale data in AI projects. 

While the DSL framework presents several advantages, there are some limitations that need to be addressed in future work. As the scale of datasets increases, the number of user-provided constraints will inevitably rise, potentially exceeding the capacity of a single annotator. Although our algorithm has enhanced the efficiency of annotation, the scalability of human input remains a challenge. Therefore, an important future research direction is the development of collaborative annotation systems that distribute tasks among multiple annotators. This strategy introduces new complexities, such as ensuring the synchronization of global data relationships, managing asynchronous updates, and mitigating discrepancies in interpretations across annotators. Tackling these challenges will be crucial to maintaining both scalability and consistency in the system. Additionally, the current method assumes that user-provided constraints are error-free, which may not always be the case in practice. Errors in constraints can compromise the algorithm’s ability to achieve optimal clustering. A further research priority is to enhance the algorithm’s robustness against such errors, while also focusing on optimizing memory efficiency. These advancements will broaden the applicability of the DSL framework, making it more effective in combining human input with algorithmic processes for large-scale data clustering tasks.

% Future work focuses on enhancing graph structures for accurate data characterization and refining node evaluation to maximize interactive session benefits. Another key aspect is ensuring system robustness under non-ideal conditions, like addressing user annotation errors and multi-user collaborative inputs.

%我们的未来工作将集中在解决方法中存在的限制。第一个是：算法是在理想状态下进行的（即用户完全正确），在这种情况下才能实现完美聚类(即ARI=1)，当用户出现错误时，算法不一定能实现完美聚类，未来我们将研究在用户出现错误的情况下，改进聚类结果。第二个是：算法的空间复杂度是$O(n)$,低于许多同类算法，但由于算法使用了约束图，当运行大规模的数据集时，维护约束图的开销仍然比较大，未来我们将寻找降低开销的方法。
% \textcolor{orange}{Our future research will focus on overcoming the limitations of our current method. Specifically: 
% (a) Our method presupposes an ideal scenario where all user-provided constraints are accurate. Only in such cases can perfect clustering (i.e., ARI = 1) be achieved. However, when user errors are present, the algorithm's performance in achieving perfect clustering may be compromised.
% (b) Although the space complexity of our method is $O(n)$, which is more efficient than many comparable algorithms, the requirement to maintain a constraint graph still incurs substantial memory overhead with large-scale datasets.
% Moving forward, we plan to explore how to enhance clustering accuracy in the presence of user errors and to further reduce the memory demands.}

\scriptsize
\vspace{-2em}
\section*{Acknowledgments}
\vspace{-0.5em}
This work is supported by the Sichuan Science and Technology Program [No. 2024NSFSC1464] and the National Natural Science Foundation of China [No. 62172102] and the Sichuan Scientific Innovation Fund [No. 2022JDRC0009] and the Natural Science Starting Project of SWPU [No. 2023QHZ010].

\vspace{-2em}
\section*{CRediT authorship contribution statement}
\vspace{-0.5em}
\textbf{Wen-Bo Xie:} Conceptualization (leading the theoretical development and proof, especially for Theorem 9), Writing - Original Draft \& Editing \& Review, Funding acquisition, Formal analysis. \textbf{Xun Fu:} Methodology, Software, Validation, Formal analysis, Writing - Original Draft \& Editing. \textbf{Bin Chen:} Writing - Editing, Formal analysis. \textbf{Yan-Li Lee:} Writing - Review, Formal analysis, Supervision. \textbf{Tao Deng:} Writing - Editing, Visualization.  \textbf{Tian Zou:} Writing - Editing, Visualization. \textbf{Xin Wang:} Writing - review, Supervision, Funding acquisition, Project administration.  \textbf{Zhen Liu:} Formal analysis, Writing - review. \textbf{Jaideep Srivastava:} Formal analysis, Supervision.

\vspace{-2em}

\bibliographystyle{elsarticle-num-names}

% \begin{spacing}{1}
\setlength{\bibsep}{0em}
\bibliography{reference_clean}
% \end{spacing}

% \end{linenumbers}

\end{document}